\newcommand{\bbbr}{\ensuremath \mathbb{R}}
\newcommand{\Neigh}{\ensuremath \mathcal{N}}
\newcommand{\citep}[1]{\cite{#1}}
\newcommand{\citet}[1]{\citeA{#1}}
\newtheorem{definition}{Definition}
\newtheorem{theorem}{Theorem}
\newtheorem{lemma}{Lemma}
\newtheorem{claim}{Claim}
\newtheorem{corollary}{Corollary}
\begin{document}

\title{On Sparse Discretization for Graphical Games}

\author{\name Luis E. Ortiz \email leortiz@cs.stonybrook.edu \\
       \addr Department of Computer Science, Stony Brook University,\\
       Stony Brook, NY  11794-4400 USA}


\maketitle

\begin{abstract}
This short paper concerns discretization schemes for representing and computing approximate Nash equilibria, with emphasis on graphical games, but briefly touching on normal-form and poly-matrix games. The main technical contribution is a representation theorem that informally states that to account for every exact Nash equilibrium using a nearby approximate Nash equilibrium on a grid over mixed strategies, a uniform discretization size linear on the inverse of the approximation quality and natural game-representation parameters suffices. For graphical games, under natural conditions, the discretization is logarithmic in the game-representation size, a substantial improvement over the linear dependency previously required. The paper has five other objectives: (1) given the venue, to highlight the important, but often ignored, role that work on constraint networks in AI has in simplifying the derivation and analysis of algorithms for computing approximate Nash equilibria; (2) to summarize the state-of-the-art on computing approximate Nash equilibria, with emphasis on relevance to graphical games; (3) to help clarify the distinction between sparse-discretization and sparse-support techniques; (4) to illustrate and advocate for the deliberate mathematical simplicity of the formal proof of the representation theorem; and (5) to list and discuss important open problems, emphasizing graphical-game generalizations, which the AI community is most suitable to solve.
\end{abstract}

\section{Introduction}
There has been quite a bit of work over last 10 years, mostly, but not
exclusively, in the theoretical computer
science community, on the problem of computing
approximate Nash equilibria in games (see, e.g., the work of
\citeR{Kearns_et_al_2001,Vickrey_and_Koller_2002,ortizandkearns03,Lipton:2003:PLG:779928.779933,Singh_et_al_2004,Soni_et_al_2007,Daskalakis_et_al_2007,TsaknakisOptANE,FederANE,Daskalakis_and_Papadimitriou_2008,Daskalakis_et_al_2009,Kontogianis_et_al_2009,HemonANE,AwasthiStable,BosseNew,PonsenANE}
and the references therein; see also the book by~\citeR{nisan07} for
additional references).~\footnote{To keep the focus and length of this short paper, the presentation and motivation of the significance and broader
scientific and practical impact the study of
Nash equilibria is beyond the scope of this paper. Similarly, the main focus here is on
\emph{graphical games (GGs)} and \emph{absolute approximations} of
Nash equilibria,
the most common type of approximation found in the
literature. However, the curious reader can go to
Appendix~\ref{app:scope} to find a brief discussion of the significance of
Nash equilibria (Appendix~\ref{app:NE}), other classes of graphical models for game
theory (Appendix~\ref{app:GMs}), and other types of approximations of
Nash equilibria (Appendix~\ref{app:rel}).}

This short paper revisits the simple \emph{uniform-discretization scheme} that~\citet{Kearns_et_al_2001} originally introduced
in the context of $n$-player $2$-action \emph{graphical games
  (GGs)}. 
 (Formal definitions of concepts
from game theory, graph theory, AI, and uniform-discretization appear in
Section~\ref{sec:prelim}. The presentation in the Introduction will
remain informal.)

\citet{Kearns_et_al_2001} 
showed that if the size of the individual grid that the uniform-discretization scheme induces over the probability of playing an
action
was
$O(2^k/\epsilon)$, where $k$ is the size of the largest set of
neighbors of any player and $\epsilon$ is an approximation quality parameter,
then for each exact Nash equilibrium of the GG, its closest joint
mixed-strategy in the resulting regular \emph{joint} grid is an approximate
$\epsilon$-Nash equilibrium of the GG.~\footnote{In an
$\epsilon$-Nash equilibrium, players tolerate losing expected gains, up to an $\epsilon$
amount, from not unilaterally deviating.} 
~\citet{Kearns_et_al_2001}  used that discretization
to design a special type of dynamic-programming algorithm tailored to
computing approximate Nash equilibria in GGs with tree-structured graphs they called {\bf TreeNash} 
that runs in time linear in
the number of players and $O(2^{k^2})$, assuming a fixed $\epsilon$. The size of
the input representation of the $n$-player $2$-action GG is $O(n\, 2^k)$. In
collaboration with Prof. Kearns, we later extended {\bf TreeNash} as a heuristic for
GGs with loopy graphs, leading to an algorithm we called
{\bf NashProp}, which stands for ``Nash Propagation''~\citep{ortizandkearns03}.

An unpublished note drafted back
in December 2002~\citep{Ortiz_2002}, later posted online\footnote{\url{http://www.cis.upenn.
edu/~mkearns/teaching/cgt/revised_approx_bnd.pdf}} as part of a course
on computational game theory
taught by Prof. Michael Kearns during the Spring 2003 at the University of
Pennsylvania, provided a
significantly sharper bound of $O(k/\epsilon)$ on the size of the discretization required
to achieve the same approximation result. The revised bound was \emph{logarithmic} in the
representation size of the game, as opposed to the previous
\emph{linear} bound that~\citet{Kearns_et_al_2001} derived.~\footnote{Note that~\citet{Kearns_et_al_2001} only considered the
case of \emph{binary} actions, i.e., $m=2$, hence $m$ does not play a
role in the results within the context of that paper.} The revised,
significantly tighter upper-bound led to an improved running time of
$O(k^k)$ for {\bf TreeNash} in terms of just $k$, which meant that,
when using
the sparser discretization derived in the old note~\citep{Ortiz_2002}, {\bf TreeNash} becomes a quasi-polynomial
time approximation scheme (quasi-PTAS) to compute an
$\epsilon$-Nash equilibrium.
~\citet{Daskalakis_and_Papadimitriou_2008} independently
  rediscovered this result,
about five years later, 
using a
  considerably more complex approach to the proof/derivation than the
  simple, algebraic
  approach presented in the old note~\citep{Ortiz_2002} and here.

Similarly, some of the results regarding algorithmic implications presented here that followed from the
old note~\citep{Ortiz_2002}, particularly for
normal-form games, have also been independently
rediscovered in the
literature using different approaches throughout the years (see, e.g.,
some of the results of~\citeR{Lipton:2003:PLG:779928.779933},
and~\citeR{Daskalakis_and_Papadimitriou_2008}).
Those particular results, discussed in more detail in the technical sections
of this paper, followed immediately from the
improved discretization bound given in the old note~\citep{Ortiz_2002}, combined with previously known results
from the CSP and graphical-models literature in AI.

The present short paper extends the old note~\citep{Ortiz_2002} and
shows how the improved discretization-size bounds for GGs,
and some specializations, fall off immediately as corollaries of
a theorem that holds for GG generalizations.

In particular, the current paper presents a result on the sufficient size for uniform
discretization to capture, in a formal sense, every possible Nash
equilibrium of the game via another close approximate Nash equilibria
in the induced grid over the space of mixed strategies of players in
the game. As example corollaries of the main result, for graphical
games with largest neighborhood size $k$, the sufficient size is $O(k
m / \epsilon)$, which implies, $O(n m/\epsilon)$ for standard
normal-form games, while for $n$-player $m$-action poly-matrix games, the sufficient size is $O(m / \epsilon)$. 

Similarly, the representation result yields immediate computational results based on connection to algorithms for {\em constraint satisfaction problems (CSPs)}. Section~\ref{sec:alg} presents and discusses several results on polynomial and quasi-polynomial time algorithms for several interesting subclasses of graphical and normal-form games that fall off from that connection.

The paper ends with a discussion of the algorithmic implications that the main
technical result
may have on other work in computational game theory. It also lists
several open problems.


\section{Motivation}
\label{sec:motiv}

This section provides additional motivation for the necessity and significance of
the study of the problem of computing approximate Nash equilibria, particularly in
GGs, and for the
development and dissemination of this short paper, with emphasis on
its particular relevance to the AI community.

\subsection{The Computational Complexity of Nash Equilibria: State of Affairs}
\citet{chenanddeng05_two} settled the complexity of computing an
exact Nash equilibrium in bimatrix games,
a 50-years-old
open problem, proving the
problem to be 
PPAD-complete (see also later work by~\citeR{chenanddeng06} and~\citeR{ChenSettlingJACM}).~\footnote{\citet{PapadimitriouPPAD} introduced
the complexity class PPAD to characterize fixed-point-type problems
such as those of
Nash equilibria in which a solution always exists. A discussion on the complexity
class PPAD is beyond the scope of this short paper.} ~\citet{DP:complexityofnashequilibria} later
proved the same result (see
also the presentation of~\citeR{DaskalakisCACM,daskalakis:195}). This means that a
polynomial-time algorithm for this problem is unlikely. This seminal result  was
the culmination of a series of results that same year for $3$-player~\citep{DaskalakisThree,chenanddeng05_three},
and $4$-player~\citep{DaskalakisFour} normal-form
games. 

Within the game-theory community,~\citet{BubelisReduction} devised a polynomial-time reduction
of any $n$-player normal-form game to some $3$-player normal-form
game; in the theoretical computer science
community,~\citet{GoldbergReducibility} rediscovered a similar result,
reducing
$n$-player normal-form games to $4$-player normal-form games. This
last result rendered the computational complexity of computing a Nash equilibrium in
arbitrary $n$-player normal-form games as
PPAD-complete~\citep{GoldbergReducibility}. 

Several
results on the PPAD-completeness of computing Nash equilibria in arbitrary
GGs served as lemmas to the proofs for normal-form
games~\citep{DaskalakisFour,DaskalakisThree}. In fact, GGs played a very important role in
finally settling the computational complexity of bimatrix
games. 

There is other important, seminal work regarding the computation of Nash
equilibria prior to the hardness results discussed above,
including some regarding other equilibrium notions such as
pure-strategy Nash equilibria and correlated equilibria~\citep{aumann74,aumann87} in graphical
and normal-form games  (see,
e.g., the work of~\citet{gilboa89,mckelvey96,kakadeetal03,GottlobHardEasy,Conitzer2008621,Papadimitriou:2008:CCE:1379759.1379762,roughgarden09,DBLP:journals/sigecom/JiangL11},
and the references therein). But the focus of this
short paper is on representing and computing \emph{approximate Nash
  equilibria in GGs}.

\subsection{A Caveat: Exact vs. Approximate Nash Equilibria in Multiplayer
  Games} 

Technically, the result for more than $3$ players is for
the problem of computing an \emph{approximate} Nash equilibrium with an approximation
quality \emph{exponentially small} in the representation size of the
game. This is because, as is well-known since Nash's journal
version~\citep{nash51} of his original paper~\citep{nash50},
Nash presents a poker-inspired game of $3$ players, built in collaboration with
Lloyd Shapley,
a 2012 Nobel Laureate in Economics, in which all
payoff hypermatrix entries/values are \emph{rational} numbers, but whose
\emph{unique} Nash equilibrium contains mixed strategies with
\emph{irrational} numbers.

\citet{ChenApprox} also proved that there does not exist a
fully-polynomial-time approximation scheme (FPTAS) for computing an
approximate Nash equilibrium, unless PPAD=P (see also the
presentation of~\citeR{ChenSettlingJACM}).~\footnote{Formal definitions of the different
approximation schemes, which textbooks such as those
of~\citet{VaziraniAppAlgBook} and~\citet{WilliamsonAppAlgBook} cover,
are beyond the scope of this short paper. See Appendix~\ref{app:approx} for a
general description of some of the approximation schemes.} For this
result, they proved that the approximation
quality can not go below an \emph{inverse-polynomial} function of the
representation size of the game. They stated in their paper, without
proof, that it is easy to extend the result for $2$-player normal-form
games to $n$-player normal-form games and $n$-player graphical
games. \emph{Hence, computing a Nash equilibrium (or more formally, an
approximate Nash equilibrium with approximation quality
inversely-exponential in the representation size of the game), in an arbitrary
$n$-player GG is PPAD-complete.} 
Yet, it
is still open
whether there exists a polynomial-time approximation scheme (PTAS) for
$n$-player games in normal-form and arbitrary GGs. 

The result of~\citet{DaskalakisCACM} also implied that the problem
of computing an approximate Nash equilibrium in $n$-player normal-form games, for
$n \geq 2$, with approximation quality inversely
exponential on the game's representation size is PPAD-complete.

All of the above helps us motivate the study of approximate Nash equilibria in
arbitrary GGs from a technical perspective:
\begin{itemize}
\item[(a)] there are
multiplayer games with payoff matrices represented entirely using
rational numbers but with unique Nash equilibria
formed of mixed strategies with \emph{irrational} numbers; 
\item[(b)] computing
``exact'' Nash equilibria 
is likely to be computationally intractable in general; and 
\item[(c)]  while there is no FPTAS to compute an
approximate Nash equilibrium with approximation
quality below a value inversely polynomial of the representation size
of the game, the existence of a PTAS is open. 
\end{itemize}
Here, we provide an
array of results, from polynomial time, to FPTAS and quasi-PTAS, for a
variety of GGs under reasonable conditions on the
parameters and network characteristics of the GG.

\subsection{``Why revisit an old research note now?''}

There are several motivations for reviving the old
note~\citep{Ortiz_2002}.
\begin{itemize}
\item 
The note has gained significance and relevance with time. The sparse
discretization and main representation results are proving to be particularly
useful in establishing polynomial running times for a specific class
of dynamic-programming and propagation algorithms.

\item Many algorithmic results immediately follow
from the connection to the previous work on constraint
networks in AI. Thus, this paper highlights the usefulness and significance of
those results, coming from the AI community, regarding the use of constraint
networks to solve network-structured CSPs.

\item The derivation of the proof of
the result on sparse discretization is \emph{deliberately simple,} 
relative to considerably more mathematically complex proofs of the
same results (see, e.g., the derivations by~\citeR{Daskalakis_and_Papadimitriou_2008}): \emph{no fancy mathematical
sophistication needed when simple algebraic manipulations suffice.}
Similarly, this short paper highlights the important, but often ignored, role that the
work on constraint networks in AI has in
simplifying the derivation and analysis of
algorithms to compute approximate Nash equilibria in
GGs and some of their specializations, including
normal-form games.

\item This paper helps clarify the distinct pros and cons of
the two approaches most often used to compute approximate Nash Equilibria: sparse
support vs. sparse discretization.

\item This paper seeks to present the state-of-the-art on computing
  approximate Nash equilibria in GGs, and lists several important open problems, particular for
GG generalizations, with important practical
consequences, and for which the AI community is particularly
well-suited to eventually solve.

\end{itemize}
The following discussion expands on the first two bullet points, with
some emphasis on prior and current work in my research group or with collaborators, but mentioning briefly
potential implications for the work of other AI researchers.

\subsubsection{Games as CSPs}

The computation of Nash equilibria in games is inherently a CSP, an area of great
relevance to AI research. In the game-induced CSP, we have one
variable for each player's mixed strategy, one domain for each player
corresponding to the simplex over the player's actions, and a set of
constraints for each player's Nash equilibrium conditions.
For example, for binary-action games, recall that a mixed-strategy
is just a probability, real-valued number between
$0$ and $1$; thus, the number of mixed-strategies available to each
player is \emph{uncountable}. Hence, the corresponding CSP would have
real-valued variables, each with an uncountably infinite domain
size: an \emph{infinite} or \emph{continuous CSP}. The same holds for multi-action games.

As stated previously,~\citet{Kearns_et_al_2001} introduced the idea in AI of using a uniform
discretization of the mixed strategies of each player which leads to a
regular grid. This turns the problem of computing approximate Nash equilibria
into a \emph{discrete CSP}~\citep{Vickrey_and_Koller_2002}, i.e., a CSP
with finite, discrete domains. This paper considers the same type of discretization.

One motivation to revisit the old note~\citep{Ortiz_2002} is the implications it has on
extensions of {\bf NashProp}~\citep{ortizandkearns03} based on 
\emph{survey propagation (SP)}~\citep{mezard03,parisi-2003-a,parisi-2003-b,braunstein-2004-,braunsteinetal05,Maneva:2007:NLS:1255443.1255445}, a message-passing technique with roots in physics and introduced
to computer science to solve boolean satisfiability problems, such as random
$3$-SAT formulas~\citep{mezardetal02,parisi-2003-c,mezard-2004-}, and more generally, a variety of CSPs, such as graph $3$-coloring~\citep{parisi-2003-d}.~\citet{ortiz08} introduced an axiomatic way to
view the standard SP algorithm for boolean  satisfiability, and an
approach called \emph{constraint propagation relaxation
  (CPR)} that facilitates the derivation of SP-like
techniques tailored to specific CSPs. In the case of GGs,
the computation of each message-passing, between players, takes
\emph{exponential} time in the size of the discretization of the space
of mixed strategies. (A thorough description of the SP-like version of
{\bf NashProp} is beyond the scope of this paper.) But the improved bound on the discretization size
for each player
is \emph{logarithmic} in the representation size
of the GG, \emph{assuming a constant number of actions}, leading
to a running time that is polynomial in $1/\epsilon$, where $\epsilon$
is the approximation-quality parameter, and either quasi-polynomial in the
representation size of the game in the case of arbitrary maximum neighborhood
size $k$, or simply polynomial in the case $k$ is bounded by a
constant, independent of the number of players $n$. The representation
size of each message is simply \emph{quadratic} in $k$, $m$ and
$1/\epsilon$, even if $k$ and $m$ are free parameters.

Another motivation for considering GG generalizations is
that they include the class of \emph{graphical poly-matrix games}, which in the
case of $2$-action games relates to \emph{linear (or generalized linear)
influence
games}~\citep{IrfanAAAI,IrfanTechReport,IrfanPhD}. In part because of their
very compact representations (i.e., of size $O(n \, m^2)$), \emph{poly-matrix
games}~\citep{Janovskaja_1968_MR_by_Isbell} are an important class of
games within game theory. We revisit their graphical version when we
consider generalizations of GGs in Section~\ref{sec:gen}.

In summary, particular
interest in \emph{the application of CPR to derive better algorithms
for computing and counting Nash equilibria in arbitrary, loopy
GGs,} as well as more specific GG classes such as \emph{linear or
generalized-linear influence
games}~\citep{IrfanAAAI,IrfanTechReport,IrfanPhD}, motivates this
paper in large part.

Similarly, the new bounds may also provide improvements to previous
discretization-based schemes for computing $\epsilon$-Nash equilibria
in other similar models (e.g., those of~\citeR{Singh_et_al_2004,Soni_et_al_2007}).

\section{Preliminaries}
\label{sec:prelim}

This section introduces the basic technical notation and concepts
necessary to understand the upcoming technical sections of the
research note.

\subsection{Basic Notation}

Denote by $x \equiv (x_1,x_2,\ldots,x_n)$ an $n$-dimensional vector
and by $x_{-i} \equiv (x_1,\ldots,x_{i-1},x_{i+1},\ldots,x_n)$ the
same vector without component $i$. Similarly, for every set $S \subset
[n] \equiv \{1,\ldots,n\}$, denote by $x_S \equiv (x_i : i \in S)$ the
(sub-)vector formed from $x$ using only components in $S$, such that,
if $S^c \equiv [n] - S$ denotes the complement of $S$, $x \equiv
(x_S,x_{S^c}) \equiv (x_i,x_{-i})$ for every $i$. If $A_1,\ldots,A_n$
are sets, denote by $A \equiv \times_{i \in [n]} A_i$, $A_{-i} \equiv
\times_{j \in [n] - \{i\}} A_j$ and $A_S \equiv \times_{j \in S}
A_j$. 

If $G=(V,E)$ is an undirected graph, then for each $i \in V$ denote by $\Neigh_i \equiv \{ j
\mid (j,i) \in E\}$ the \emph{neighbors of node/vertex $i$} in
$G$, and $N_i \equiv \Neigh_i \bigcup \{ i \}$ the \emph{neighborhood of
  node/vertex $i$} in $G$. Note that we have $i \notin \Neigh_i$ but $i
\in N_i$ for all $i \in V$.

\subsection{Graphical Games in Local Normal-Form Payoff Representations}

This section formally defines \emph{graphical games (GGs)}, which are graphical
models for compact representations of classical game representations
in game theory~\citep{Kearns_et_al_2001}. GGs extend and generalize \emph{normal-form
games.} In particular, a normal-form game is a GG with a complete/fully-connected graph. 

\begin{definition}
A {\em graphical game (GG)\/} consists of an \emph{undirected graph}
$G=(V,E)$,~\footnote{It is easy to extend the
  same result to GGs with \emph{directed} graphs.} where
each node $i \in V$ in the graph corresponds to a player $i$ in the
game, and for each player $i$, we have a set of \emph{actions} or pure
strategies $A_i$ and a local payoff hypermatrix/function $M'_i : A_{N_i}
\to \bbbr$, where $N_i$ is the \emph{neighborhood} of player $i$ in the
graph defined with respect to the edges $E$ of the graph.
The (global) payoff hypermatrix/function $M_i$ of player $i$ is such
that, for each joint-action $x \in A \equiv A_V$, we have
$M_i(x) \equiv M'_i(x_{N_i})$. That is, the payoff that each individual
player $i$ receives when all players, including $i$, take joint-action/pure-strategy
$x$ is a \emph{function of the joint-actions $x_{N_i}$ of player $i$'s
  neighborhood $N_i$ only, thus conditionally independent of
  $x_{V-N_i}$ given $x_{N_i}$}. It
is convention to let $V = \{1,\ldots,n\} \equiv [n]$, so that $n
\equiv |V|$. The representation size of each local
payoff hypermatrix $M'_i$ is $\Theta(|A_{N_i}|) = O(m^k)$, where $m \equiv \max_{i
  \in V} |A_i|$ and $k \equiv \max_{i \in V} |N_i|$. The
representation size of the GG is $\Theta(\sum_{i \in V} |A_{N_i}|)
= O(n m^k)$.
If 
for all $i$
we have $N_i = V$, then the GG is a standard {\em
  normal-form game}, also called strategic- or matrix-form game,
which has a representation size $\Theta(n |A|) = O(n m^n)$.
\end{definition}
A GG achieves considerable representation savings whenever
$k \ll n$.

\subsection{Solution Concepts}

A {\em joint mixed strategy\/} $p \equiv (p_1,\ldots,p_n)$ in a game is formed from each individual {\em mixed strategy\/} $p_i \equiv (p_i(x_i) : x_i \in A_i)$ for player $i$, which is a probability distribution over the players actions $A_i$ (i.e., $p_i(x_i) \geq 0$ for all $x_i \in A_i$ and $\sum_{x_i \in A_i} p_i(x_i) = 1$). Denote by $\mathcal{P}_i \equiv \{ \, p_i \mid p_i(x_i) \geq 0, \mbox{ for all } x_i \in A_i \mbox{ and } \sum_{x_i \in A_i} p_i(x_i) = 1\}$ the set of all possible mixed strategies of player $i$ (i.e., all possible probability distributions over $A_i$).  Similar to the vector notation introduced above, for all $i$ and any clique/set $S \subset V$, denote by $p_{-i}$ and $p_S$ the mixed strategies corresponding to all the players {\em except\/} $i$ and all the players in clique $S$, respectively, so that $p \equiv (p_i,p_{-i}) \equiv (p_S,p_{V-S})$.
A joint mixed strategy $p$ induces a joint (product) probability
distribution over the joint action space $A$,
such that, for all $x \in A$, $p(x) \equiv \prod_{i \in V} p_i(x_i)$ is the probability, with respect to joint mixed strategy $p$, that joint action $x$ is played.

The {\em expected payoff\/} of player $i$ with respect to joint mixed strategy $p$ is denoted by $M_i(p) \equiv \sum_{x \in A} p(x) M_i(x)$.

\begin{definition}
For any $\epsilon \geq 0$, a joint mixed-strategy $p^*$is called an
{\em $\epsilon$-Nash equilibrium\/} if for every
player $i$, and for all $x_i \in A_i$, $M_i(p^*_i,p^*_{-i}) \geq
M_i(x_i,p^*_i) - \epsilon$. That is, no player can increase its
expected payoff more than $\epsilon$ by {\em unilaterally\/} deviating
from its mixed strategy part $p^*_i$ in the equilibrium, assuming the
others play according to their respective parts $p^*_{-i}$. A {\em
  Nash equilibrium\/}, or more formally, a mixed-strategy Nash
Equilibrium, is then a 0-Nash equilibrium.
\end{definition}
Note that, for all $p_{-i} \in \mathcal{P}_{-i}$, $\max_{p_i \in \mathcal{P}_i} M_i(p_i,p_{-i}) = \max_{x_i \in A_i} M_i(x_i,p_{-i}) \geq  M_i(x'_i,p_{-i})$, for all $x'_i \in A_i$.
Also, note that the equilibrium conditions are invariant to affine
transformations. In the case of GGs with local payoff
matrices represented in tabular/matrix/normal-form, it is convention
to assume, without loss of generality, that the payoff
values are such that, for each player $i \in V$, we have
$\min_x M_i(x) = \min_{x_{N_i}} M'_i(x_{N_i})
= 0$ and $\max_x M_i(x) = \max_{x_{N_i}} M'_i(x_{N_i})
= 1$. Note that in the case of GGs using such ``tabular''
representations, we do not lose generality by assuming the maximum and minimum local
payoff values of each player are $0$ and $1$, respectively, because we
can compute them both \emph{efficiently.} This will \emph{not} be the case for
GG generalizations, in the worst
case. Section~\ref{sec:gen} revisits this point.

\section{Discretization Schemes}

The discretization scheme considered here is similar to that of~\cite{Kearns_et_al_2001}, except that we allow for the possibility of different discretization sizes for the mixed strategies of players. 
\begin{definition}
In an {\em (individually-uniform) discretization scheme\/}, the uncountable set $I = [0,1]$ of possible value assignments to the probability $p_i(x_i)$ of each action $x_i$ of each player $i$ is approximated by a finite grid defined by the set $\widetilde{I}_i = \{0, \tau_i, 2 \tau_i, \ldots, (s_i-1) \tau_i, 1\}$ of values separated by the same distance $\tau_i = 1/s_i$ for some integer $s_i$. Thus the {\em discretization size\/} is $|\widetilde{I}_i| = s_i + 1$. Then, we would only consider mixed strategies $q_i$ such that $q_i(x_i) \in \widetilde{I}_i$ for all $x_i$, and $\sum_{x_i \in A_i} q_i(x_i) = 1$. The {\em induced discretized space\/} of joint mixed strategies is $\widetilde{I} \equiv \times_{i \in V} \widetilde{I}_i^{\, |A_i|}$, subject to the individual normalization constraints.
\end{definition}

\section{Sparse Discretization}

The obvious question is, how small can we make $s_i$ and still
guarantee that there exists an $\epsilon$-Nash equilibrium in the induced
discretized space of joint mixed strategies? The following 
corollary provides a
stronger answer for GGs: it provides values
for the $s_i$'s that guarantee that for \emph{every} Nash equilibrium, its closest
point in the induced grid is an $\epsilon$-Nash equilibrium. An interesting aspect
of the result is that $s_i$ depends only on information local to
player $i$'s neighborhood: the number of actions $|A_i|$
available to player $i$ and the largest number of neighbors $|\Neigh_j|$
\emph{of the neighbors} $j \in \Neigh_i$ of
player $i$.

Note that the corresponding discretization bound provided in~\cite{Kearns_et_al_2001} in the context of GGs is \emph{exponential} in the largest neighborhood size $k$. In contrast, the bound here is {\em linear\/} in $k$, a substantial reduction.

The corollary is a GG instantiation of the
Nash-equilibria Representation Theorem based on sparse discretization, Theorem~\ref{the:spdisc}, which
holds for a broader class of GG generalizations. 
 The
statement and discussion of
the theorem is in Section~\ref{sec:gen}. The statement of the corollary uses notation introduced above.
\begin{corollary}
{\bf(Sparse Discretization for Graphical Games)} 
\label{cor:spdisc}
 For any $m$-action graphical game and any $\epsilon > 0$, a
 (individually-uniform) discretization with \[ s_i = \left\lceil \frac{2 \, |A_i|
     \, \max_{j \in \Neigh_i} |\Neigh_j|}{ \epsilon }\right\rceil
 = O\left( \frac{m \, k}{\epsilon} \right) \] for each player $i$ is sufficient to guarantee that for {\em every\/} true (i.e., not approximate) Nash equilibrium of the game, its closest (in $\ell_\infty$ distance) joint mixed strategy in the induced discretized space 
is also an $\epsilon$-Nash equilibrium of the game.
\end{corollary}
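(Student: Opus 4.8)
The plan is to start from an arbitrary exact Nash equilibrium $p^*$, exhibit its nearest grid point $q \in \widetilde{I}$, and show that $q$ is an $\epsilon$-Nash equilibrium. First I would pin down the rounding: for each player $i$, take $q_i$ to be a valid grid distribution nearest to $p^*_i$ in $\ell_\infty$. The one subtlety is that $q_i$ must still sum to one, so I cannot simply round each coordinate to its nearest multiple of $\tau_i = 1/s_i$. Instead I would use an apportionment-style rounding: since $\sum_{x_i} p^*_i(x_i)/\tau_i = s_i$ is an integer, each $p^*_i(x_i)/\tau_i$ can be rounded up or down to integers that still sum to $s_i$, which keeps $q_i$ a probability distribution on the grid while guaranteeing $\|p^*_i - q_i\|_\infty \le \tau_i$. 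Because $\|p^* - q\|_\infty = \max_i \|p^*_i - q_i\|_\infty$ decouples across players, this per-player-closest $q$ is a closest grid point overall, and it satisfies $\|p^*_i - q_i\|_\infty \le \tau_i$ (hence $\|p^*_i - q_i\|_1 \le |A_i|\,\tau_i$) for every $i$ simultaneously.

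Next I would reduce the equilibrium condition to a payoff-perturbation estimate. Fixing a player $i$ and an action $x_i$, I would insert and cancel terms evaluated at $p^*$:
\[ M_i(x_i,q_{-i}) - M_i(q_i,q_{-i}) = \bigl[M_i(x_i,q_{-i}) - M_i(x_i,p^*_{-i})\bigr] + \bigl[M_i(x_i,p^*_{-i}) - M_i(p^*_i,p^*_{-i})\bigr] + \bigl[M_i(p^*_i,p^*_{-i}) - M_i(p^*_i,q_{-i})\bigr] + \bigl[M_i(p^*_i,q_{-i}) - M_i(q_i,q_{-i})\bigr]. \]
The second bracket is $\le 0$ because $p^*$ is an exact equilibrium. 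The first and third brackets change only the opponents' strategies $p^*_{-i} \to q_{-i}$ (and, by the graphical structure, only the neighbors $\Neigh_i$ matter, since $M_i$ depends on $x_{N_i}$ alone), while the last bracket changes only player $i$'s own mixed strategy $p^*_i \to q_i$. So it suffices to bound an ``opponent'' perturbation $\Delta_{\mathrm{opp}} \equiv \max_{a_i}|M_i(a_i,p^*_{-i}) - M_i(a_i,q_{-i})|$ and a ``self'' perturbation $\Delta_{\mathrm{self}}$, and to show $2\Delta_{\mathrm{opp}} + \Delta_{\mathrm{self}} \le \epsilon$.

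For both perturbations I would use that $M_i(p)$ is multilinear in the distributions $p_j$, $j \in N_i$, and apply a hybrid/telescoping argument that swaps one neighbor's distribution from $p^*_j$ to $q_j$ at a time. Each swap contributes a term of the form $\sum_{x_j} (p^*_j(x_j) - q_j(x_j))\, g(x_j)$ with $g$ valued in $[0,1]$ (using the convention $\min M'_i = 0$, $\max M'_i = 1$) and with $\sum_{x_j}(p^*_j(x_j)-q_j(x_j)) = 0$. Here the key trick is to subtract a constant from $g$ before bounding: since the differences sum to zero, I may replace $g$ by $g - \tfrac12$, whose magnitude is at most $\tfrac12$, giving a bound of $\tfrac12\|p^*_j - q_j\|_1 \le \tfrac12 |A_j|\,\tau_j = |A_j|/(2 s_j)$ per swap. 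Substituting $s_j \ge 2|A_j|\max_{l\in\Neigh_j}|\Neigh_l|/\epsilon$ collapses this to $\epsilon/\bigl(4\max_{l\in\Neigh_j}|\Neigh_l|\bigr)$; and because the graph is undirected, $j \in \Neigh_i$ forces $i \in \Neigh_j$, so $\max_{l\in\Neigh_j}|\Neigh_l| \ge |\Neigh_i|$, making each swap at most $\epsilon/(4|\Neigh_i|)$. Summing the $|\Neigh_i|$ swaps gives $\Delta_{\mathrm{opp}} \le \epsilon/4$, and the single self-swap similarly gives $\Delta_{\mathrm{self}} \le \epsilon/4$, so $2\Delta_{\mathrm{opp}} + \Delta_{\mathrm{self}} \le \tfrac34\epsilon \le \epsilon$.

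The step I expect to be the main obstacle is getting the constants to close with only the factor of $2$ that appears in $s_i$. A naive bound that replaces each swap by $\|p^*_j - q_j\|_1$ without the centering trick already costs $2\Delta_{\mathrm{opp}} = \epsilon$ on its own, leaving no room for $\Delta_{\mathrm{self}}$; the $\tfrac12$ gained from subtracting the constant is therefore essential, and it is exactly what the multilinearity-plus-zero-sum structure buys. The second delicate point is the \emph{self-referential} neighborhood bound: each player's local discretization size $s_j$ is deliberately calibrated to the neighborhood sizes of \emph{its} neighbors, and the undirected-graph identity $j\in\Neigh_i \Leftrightarrow i\in\Neigh_j$ is precisely what lets these purely local choices control the sum over player $i$'s neighborhood. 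The normalization-respecting (apportionment) rounding in the first step is a minor but genuine subtlety worth stating explicitly, since coordinate-wise nearest rounding need not land in the feasible grid.
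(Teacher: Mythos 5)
Your proof is correct, but it takes a genuinely different route from the paper's. The paper obtains this corollary as a specialization of its general GMhG theorem (Theorem~\ref{the:spdisc}), whose proof expands $M_i(p)-M_i(q)$ \emph{exactly} via a binomial/multilinear expansion over all nonempty subsets of each clique (Lemma~\ref{lem:diff}), so it must control higher-order products of the $\Delta_j$'s; it does so by summing a binomial series, $\sum_B z^{|B|} = (1+z)^{|C|-1}-1$, and invoking $1+z\le e^z\le 1+z+z^2$, which is precisely why the theorem carries an upper-bound condition on $\epsilon$ (vacuous for $[0,1]$-normalized GGs). You instead prove the corollary directly by a first-order hybrid/telescoping argument: each single-player swap $p^*_j\to q_j$ is an \emph{exact} identity by multilinearity, so no higher-order terms ever appear and no condition on $\epsilon$ is needed; the factor you must recover by other means is supplied by the centering trick (replacing $g$ by $g-\tfrac12$, valid because $\sum_{x_j}\Delta_j(x_j)=0$), which the paper does not use, and you even finish with slack at $\tfrac34\epsilon$. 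Both arguments share the same two structural pillars: the per-player rounding bound $\|p^*_i-q_i\|_\infty\le\tau_i$ (which the paper simply asserts for ``the closest'' grid point, while your apportionment construction makes explicit why a normalization-respecting grid point this close exists and can be taken as the closest one) and the undirected symmetry $j\in\Neigh_i\Leftrightarrow i\in\Neigh_j$, which is what lets the locally calibrated $s_j$ control the $|\Neigh_i|$ swap terms in player $i$'s bound --- the paper uses the identical self-referential step when it lower-bounds $\max_{j\in\Neigh_k}\sum_{C'\in\mathcal{C}_j}R_{j,C'}(|C'|-1)$ by the corresponding sum for player $i$. What each approach buys: yours is more elementary, condition-free, and isolates why the constant $2$ in $s_i$ suffices; the paper's expansion machinery is what generalizes cleanly to GMhGs with heterogeneous clique ranges $R_{i,C}$, yielding the full theorem of which this corollary is one instance. (One shared blemish, not a gap in your argument: for an isolated player, $\Neigh_i=\emptyset$ makes the formula for $s_i$ degenerate in both the statement and either proof.)
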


\section{Algorithmic Implications}
\label{sec:alg}

One objective of this research note is to clarify the distinction
between sparse discretization and sparse support in the context of
approximate Nash equilibria. The next subsection deals with that objective in the context of normal-form games. Then,
the second subsection presents the algorithmic implications of the
sparse-discretization approach, borrowing heavily from standard
results in AI, and highlighting their simplifying usefulness and powers.

Before starting the presentation, it is important to note a crucial
distinction between the parameters of interest in GGs
versus those of normal-form games, a specialization of GGs. In the study of GGs, the main objects of
interest, in terms of input-game representation size,
is the number of players $n$ and the graph structure (e.g., the
maximum number of neighbors $k$, the graph tree-width, or the
hypertree-width of the game-induced constraint network), and their
effect in both representation and computations of (approximate) Nash
equilibria; the maximum number of actions $m$ of each player plays a
lesser role, and is
often assumed constant. Yet, the statements of the results here still
sometimes include the explicit
dependence on $m$, so that such dependence is clear should $m$ also be
an important component/aspect of interest in evaluating the quality of
the approximations and the resulting algorithms.

\subsection{Sparse Discretization vs. Sparse/Small Support}
\label{sec:distinct}

Before starting this subsection, it is important to clarify that the work of~\citet{Althofer1994339}
  does not imply, and cannot be used to derive, the sparse representation
  result presented as the main technical contribution here. Instead, the approximation bounds presented in~\citet{Lipton:2003:PLG:779928.779933}
  follow immediately from those of~\citet{Althofer1994339}; in fact,
  both use
  the same approach based on applying Ho\"effding bounds~\citep{Hoeffding63} and the
  probabilistic method~\citep{Alon_and_Spencer_2004}. More
  recently,~\citet{HemonANE} derived a slightly improved small-support
  upper-bound result in multi-player games by using a more general large
  deviation bound than Ho\"effding's, due to~\citet{McDiarmid}.

It is equally important to understand the distinctions between the two
types of approximations. Each type has their own pros and cons. 

As a warmup to the results and discussion regarding GGs and their
generalization, let us first consider GG simplifications,
starting with the simple case of $2$-player games in normal form, and
then moving to $n$-player games in normal-form.

\subsubsection{Bimatrix games}

In the
small-support approach to approximation, we perform a brute-force support-enumeration
search over all possible ${m \choose r}$ subsets of strategies, where
$r$ is the minimum
support necessary to guarantee the desired absolute approximation
quality $\epsilon > 0$; thus, it is
important to keep in mind that $r$ depends on $\epsilon$. Hence, the
running time of approximation algorithms of that type is ${m \choose r}
= O(m^r)$, thus polynomial in $m$ but \emph{exponential} in a
function of the approximation parameter. Given
$\epsilon > 0$, the support size
that results from the approach
of~\citet{Althofer1994339} and~\citet{Lipton:2003:PLG:779928.779933}
is $r = O\left(\frac{\ln{(m \, n)}}{\epsilon^2} \right)$. It is a
well-known fact that given the supports for each player in a Nash equilibrium, one
can set up a simple linear program (LP) to find the Nash equilibrium. Each call to
the LP runs in polynomial time in $r$. This leads to a naive
  brute-force algorithm, also called ``support enumeration,'' for
  computing $\epsilon$-Nash equilibrium in bimatrix games:
  for each possible support set of size $r$, run an LP to either find
  an $\epsilon$-Nash equilibrium or decide there is none with that particular
  set. There are ${m \choose r}
= O(m^r)$ possible support for each player. Hence, for
  $2$-player games, the algorithm runs in time
  $m^{O(r)} = m^{O\left({\frac{\ln{m}}{\epsilon^2}} \right)}$, thus
\emph{exponential} in $1/\epsilon^2$, and also exponential in
$(\ln{m})^2$, which means \emph{quasi-polynomial} in $m$.  

But, there is actually an alternative to the LP. Because of the probabilistic way in which the result
for sparse supports arises (i.e., using a combination of Hoeffding's or other
large deviation bound and invoking the probabilistic method), we only
need to search over mixed strategies in the support whose individual
probability value has the form $z_k/r$, for each action $k =
1,\ldots,m$, and $\sum_k z_k/r = 1$, $0 \leq z_k \leq
r$.~\citet{Lipton:2003:PLG:779928.779933} call such ``discretization scheme'' $r$-uniform.  Another way
of viewing this is as a uniform-discretization scheme of size
$r+1$, which does not seem to guarantee to find all Nash equilibria, but it does guarantee
to find at least one. Thus, in that case, the worst-case running time
to find at least one $\epsilon$-Nash equilibrium is $(m \, r)^{O\left( r
\right)}$. Recalling that $r = O\left(\frac{\ln{m}}{\epsilon^2}
\right)$, we have that the worst-case running time is still
\emph{exponential} in $1/\epsilon^2$ and exponential in $(\ln{m})^2$, thus
\emph{quasi-polynomial} in $m$. 

Regardless, because the representation size of the game is
$N=O(m^2)$, both algorithms run in time
$N^{O\left(\ln{N}\right)}$, thus \emph{both are quasi-PTASs for computing approximate
Nash equilibria in bimatrix games}~\citep{Lipton:2003:PLG:779928.779933}.

 In contrast, suppose
    that $s$ is the size of the uniform-grid sparse discretization
    presented here. A simple analysis reveals $s
    = O\left(\frac{m}{\epsilon} \right)$. Hence, using the
      sparse discretization, a naive brute-force search runs in time
      $O\left(\left( \frac{m}{\epsilon} \right)^{m}\right)$, thus
\emph{polynomial} in $1/\epsilon$, but \emph{exponential} in $m
\ln{m}$. 

\subsubsection{Multiplayer games}
\label{sec:multi}

For multiplayer games, one cannot
use an LP to compute an $\epsilon$-Nash equilibrium even if one were to know the
support of the $\epsilon$-Nash equilibrium strategies of all
players. This is because now the $\epsilon$-Nash equilibrium conditions are highly
non-linear, involving the product of $n$ variables, where $n$ is the
number of players, each variable corresponding to the probability of a
particular action in the mixed strategy of a player. (Recall that a
Nash equilibrium
is a product distribution.) Thus, it might
first appear unclear how to efficiently perform this computation
efficiently in multiplayer games. It would seem, on the surface, that for sparse support, an algorithm
based on support enumeration would run in worst-case time $O\left(m^{r
    \, n} \mathrm{rtime}(n,m,r) \right)$, where $\mathrm{rtime}(n,m,r)$ is the worst-case running time of the
algorithm that attempts to compute an $\epsilon$-Nash equilibrium with the given
players' supports, and is a direct function of $r,m$, and $n$, and of
course, indirectly, of
$\epsilon$ too. 

But, as mentioned in the case of bimatrix games, because of the probabilistic way in which the result
for sparse supports arises, we only
need to search over the set of $r$-uniform mixed strategies in the
support. Thus, using this approach of support enumeration the
worst-case running time becomes $\mathrm{rtime}(n,m,r) = O\left( {m \choose r} \, r^r
   \right)$. Thus, the total running time is $O\left((mr)^{r
    \, n} \right)$. ~\citet{Lipton:2003:PLG:779928.779933} proved that
using $r = O\left( \frac{n^2 \ln{(n^2 m)}}{\epsilon^2}
\right)$ is sufficient.~\footnote{The reader should be aware that the notation of~\citet{Lipton:2003:PLG:779928.779933} is the exact opposite of
  the one here: here $n$ denotes the number of players, while there it
  denotes the maximum number of actions of any player; also, here $m$
  denotes the maximum number of actions of any player, while there it
  denotes the number of players.} Substituting that value of $r$, the
expression of the worst-case
running time becomes $m^{O\left( \left( n^3 \ln{(n^2 \, m)}
      \right) \left( \frac{1}{\epsilon^2} \ln{\frac{1}{\epsilon^2}}
      \right) \right)}$. Now, to obtain a quasi-PTAS, 
we would have to impose a condition on $n$, such as $n =
O(\ln{m})$. This is unlike the bimatrix-game case
in which no restriction was necessary.

In contrast, for sparse discretization in multiplayer games, the value
of $s
    = O\left(\frac{m \; n}{\epsilon} \right)$. Hence, a brute-force search algorithm applied to a
normal-form game, for example, would run in
worst-case time $\left( \frac{m \, n}{\epsilon} \right)^{O\left( m \,
            n\right)}$, thus
\emph{polynomial} in $1/\epsilon$, but \emph{exponential} in $m \, n
\ln{m \, n}$. (The results for GGs are in
        a later section.)  To obtain a quasi-PTAS, we would have to
        impose a condition on $m = \mathit{poly}(n)$, not $n$.

\subsubsection{Tradeoff}

The tradeoff is now clear. If the interest is
        $m$, and $\epsilon$ is fixed, then the approach
        of~\citet{Althofer1994339}
        and~\citet{Lipton:2003:PLG:779928.779933} is better.
        On the other hand, if the interest is $\epsilon$, and $m$ is fixed,
        which is often the case for GGs as discussed above, then the sparse
        discretization wins. It is important to note that the sparse
        discretization guarantees the computation of \emph{all}
        $\epsilon$-Nash equilibria, while the sparse-support approach can only
        guarantee \emph{one} $\epsilon$-Nash equilibrium. It is also important to
        note that in the case of multiplayer games, the sparse-support
        approach yields a quasi-PTAS only after bounding $n$, which
        is the quantity of most interest in GGs. The
        sparse-discretization approach bounds $m$, which is the number
        of actions of each player to obtain a quasi-PTAS, but $m$ is
        not as significant in a GG setting as $n$ is.

\subsection{Reductions to Consistency in CSPs}

This section involves concepts from AI and graph
theory; in the interest of space, the reader is referred to
appropriate standard references
(see, e.g., the textbooks of~\citeR{Russell_and_Norvig_2003}, for AI
and~\citeR{Dechter_2003} for graph theory as used in AI; or~\citeR{bollobas} for graph theory).

The representation result of the last section has several immediate
computational consequences for the problem of computing approximate
Nash equilibria in GGs with local payoff matrices represented in tabular form, and in turn, also for multi-player games represented in standard normal (tabular) form.

To simplify the presentation, let us assume that payoff values are in $[0,1]$, all the players in the GG have the same number of actions $m$ and the largest neighborhood size in the graph of the game is $k $,
so that the representation size of the GG is $\Theta(n m^k)$. For this case, the uniform discretization presented in Theorem~\ref{the:spdisc} has size $s = s_i = O(m k / \epsilon)$ for all $i$.

\subsubsection{Sparse-discretization GG-induced CSP}

Once we introduce a discretization over the space of mixed strategies,
then it is natural to formulate the problem of computing
$\epsilon$-Nash equilibria
on the induced discretized space as a CSP, or more specifically in the
case of GGs, as a special type of
constraint network~\citep{Dechter_2003}. (In the interest of keeping this paper short, please
see~\citeR{Russell_and_Norvig_2003}, or other introductory textbook on
AI, for general information on CSPs. The presentation here contains
only the CSP concepts
necessary exclusively within the context of the paper's topic.) Several researchers have taken this or related approaches either explicitly or implicitly~\cite{Kearns_et_al_2001,Vickrey_and_Koller_2002,ortizandkearns03,Soni_et_al_2007}. The CSP for the game has one variable, domain and constraint for each player. Each variable corresponds to mixed strategy $p_i$ for each player $i$. Each variable's domain corresponds to the discretized set $\widetilde{I}^m_i$ of mixed strategies for each player $i$, properly corrected to account for normalization. The \emph{approximate best-response equilibrium conditions} are the following:
for each player $i$, each constraint function (table) $c_i : \widetilde{I}^{m k}_i \to \{0,1\}$ is defined such that, for all $p_{N(i)} \in \widetilde{I}^{m k}_i$, $c_i(p_{N(i)}) = 1$ if and only if $M_i(p_i,p_{-i}) \geq  \max_{x_i' \in A_i} M_i(x'_i,p_{-i}) - \epsilon$.
Each constraint has arity at most $k$ and encodes the approximate best-response equilibrium conditions, each represented in tabular form using $s^{m k} = O((m k/ \epsilon)^{m k})$ bits.
The transformation takes time 
$O(\mathit{poly}(n, (m k/\epsilon)^{m k}))$ 
and the size of the resulting CSP is $T =\Theta(n (m k/\epsilon)^{m k})$.

As previously discussed, for GGs, it is natural to consider the number of players $n$ as being
the ``free'' parameter of the representation. Hence, if  $m k \log(m
k) = O(\log(n))$ and $\epsilon = n^{\Omega(-1/(m k))} = (m
k)^{-\Omega(1)}=(\log n)^{-\Omega(1)}$, then both the time to perform
the CSP transformation and its resulting representation size are
polynomial in the representation size of the game (i.e., polynomial in
the number of players). Similarly, if $m$ and $k$ are bounded (by a
constant, independent of $n$), then the representation size of the game is linear in $n$ and the running time of the transformation is polynomial in $n$ and $1/\epsilon$. Note that this is a natural restriction on the game parameters because otherwise the representation size would be exponential in the number of players, thus defeating the main purpose of 
the GG representation 
in the first place: \emph{succinctness.}

At this point, we can apply any of a large number of existing
off-the-shelf techniques for solving the induced game-CSP, or apply
techniques such as {\bf NashProp}~\cite{ortizandkearns03} that
take advantage of the particular properties of the game best-response
constraints. 

Instead, in the next section, we will see how standard results for solving constraint
networks~\citep{Dechter_2003} lead immediately to simple derivations of algorithmic results
for GGs, some of which had been independently re-discovered by employing more
sophisticated mathematical tools~\citep{Daskalakis_and_Papadimitriou_2008}. Facilitated in large part by the knowledge acquired
for solving constraint networks and other graphical models in the AI community over the last 50
years, the derivations here are quite straightforward and do not require
complex mathematics. By building on existing AI knowledge, fancy
mathematical derivations for the same result become unnecessary despite their elegance.

\subsubsection{An approach based on the GG-induced CSP hypergraph}

An approach to solving CSPs in AI, now over 10 years olds, works on
the \emph{hypergraph induced by the game
  CSP}~\cite{Gottlob_et_al_2001}. If $T$ is the representation size of
the game-CSP, $w$ is the hypertree width of the hypergraph, and the
corresponding hypertree decomposition for the CSP has been computed,
then solving the CSP takes time
$O(T^{w+1}\log(T))$~\citep{Gottlob_et_al_2000,Gottlob_et_al_2002,Gottlob_et_al_2001}
(see also page 158 of ``Bibliographical and Historical Notes'' Section
in Chapter 5 of~\citeR{Russell_and_Norvig_2003}). In the case of
GGs, we can
compute the hypertree decomposition that the algorithm would use in time $O(n^{2 w + 2})$.

The following theorem summarizes the discussion.
As a preamble to a discussion on primal graphs and treewidths later in
the section, note that it is known that a CSP might have hypergraphs
with bounded hypertree width, but whose \emph{primal} graph has {\em
  unbounded\/} treewidth. However, the treewidth \emph{always bounds}
the hypertree width. Thus, the restriction that the hypertree
width be bounded by a constant may not be as limiting to the
application of the results as it first appears.
\begin{theorem}
\label{the:hw}
There exists an algorithm that, given as input a number $\epsilon > 0$ and a GG with $n$ players, maximum neighborhood size $k$ and maximum number of actions $m$, and whose corresponding CSP has a hypergraph with hypertree width $w$, computes an $\epsilon$-Nash equilibrium of the GG in time $[ n \left( m k / \epsilon \right)^{m k} ]^{O(w)}$.
\end{theorem}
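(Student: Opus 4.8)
The plan is to obtain Theorem~\ref{the:hw} by assembling three ingredients already in place: the sparse-discretization bound of Corollary~\ref{cor:spdisc}, the explicit construction of the sparse-discretization GG-induced CSP described above, and the hypertree-width CSP-solving machinery of~\citet{Gottlob_et_al_2001}. The core observation is that, by construction of the constraints, a joint assignment $p$ in the grid $\widetilde{I}$ satisfies every constraint (that is, $c_i(p_{N(i)}) = 1$ for all $i$) if and only if $M_i(p_i,p_{-i}) \geq \max_{x_i' \in A_i} M_i(x_i',p_{-i}) - \epsilon$ for every player $i$. Since the constraints are defined using the game's \emph{true} expected payoffs $M_i$, this is exactly the $\epsilon$-Nash equilibrium condition for the actual GG. Hence computing an $\epsilon$-Nash equilibrium on the grid reduces precisely to finding a satisfying assignment of this CSP, and any such assignment is a genuine $\epsilon$-Nash equilibrium of the game.

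First I would establish that the CSP is satisfiable, so the algorithm is never forced to report failure. By Nash's existence theorem the game has at least one exact Nash equilibrium $p^\star$; under the standing normalization that payoffs lie in $[0,1]$, Corollary~\ref{cor:spdisc} guarantees that with the chosen size $s_i = O(mk/\epsilon)$ the closest grid point to $p^\star$ in $\ell_\infty$ distance is itself an $\epsilon$-Nash equilibrium, hence a satisfying assignment. Thus the solution set is nonempty, and any complete solver that returns a solution whenever one exists will output an $\epsilon$-Nash equilibrium. Next I would lay out the pipeline and its running time. Building the CSP takes time $O(\mathrm{poly}(n,(mk/\epsilon)^{mk}))$ and yields a CSP of size $T = \Theta(n\,(mk/\epsilon)^{mk})$; since each constraint has arity at most $k$, the induced hypergraph has exactly $n$ hyperedges, one per neighborhood $N_i$. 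One then computes a hypertree decomposition of width $w$, which is polynomial for fixed $w$ and, in the GG setting, costs $O(n^{2w+2})$; finally, the algorithm of~\citet{Gottlob_et_al_2000,Gottlob_et_al_2002,Gottlob_et_al_2001} on the decomposed CSP finds a satisfying assignment in time $O(T^{w+1}\log T)$.

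The time accounting then closes the argument. Substituting $T = \Theta(n\,(mk/\epsilon)^{mk})$ and using $n \leq T$, each phase is bounded by $T^{O(w)} = [\,n\,(mk/\epsilon)^{mk}\,]^{O(w)}$: the construction is $T^{O(1)}$, the decomposition is $O(n^{2w+2}) = T^{O(w)}$, and the solve is $O(T^{w+1}\log T) = T^{O(w)}$. Summing the three phases gives the claimed $[\,n\,(mk/\epsilon)^{mk}\,]^{O(w)}$ bound.

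The main obstacle is less a hard technical step than the careful verification of two points that make the reduction valid. The first is the nonemptiness guarantee: without Corollary~\ref{cor:spdisc} (which rests on Theorem~\ref{the:spdisc}) there would be no assurance that the finite grid contains any $\epsilon$-Nash equilibrium, and the solver could legitimately return ``no solution.'' The second is that the hypertree-width machinery must be invoked in its \emph{search} (solution-producing) form rather than its decision form, and that emitting an explicit satisfying assignment does not inflate the $O(T^{w+1}\log T)$ bound; this is standard for the Gottlob et al.\ algorithm but should be stated explicitly. Everything else is routine bookkeeping of the three time bounds into a single $T^{O(w)}$ expression.
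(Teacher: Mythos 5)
Your proposal is correct and follows essentially the same route as the paper: Theorem~\ref{the:hw} there is explicitly offered as a summary of this exact pipeline---build the sparse-discretization GG-induced CSP of size $T = \Theta(n\,(mk/\epsilon)^{mk})$ (nonempty by Nash existence together with the representation result), compute a hypertree decomposition in $O(n^{2w+2})$ time, and solve the CSP in $O(T^{w+1}\log T)$ time via the Gottlob et al.\ machinery, with all phases absorbed into $[\,n\,(mk/\epsilon)^{mk}\,]^{O(w)}$. Your explicit attention to nonemptiness and to invoking the solver in its solution-producing form merely spells out details the paper leaves implicit.
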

The following corollary characterizes the computational complexity of the
approximation schemes resulting from instances of the last theorem.
\begin{corollary}
There exists an algorithm that, given as input a GG with corresponding
hypergraph of hypertree width $w$ bounded by a constant independent
of the number
of players $n$, with a
logarithmic function of $n$ restricting the maximum number of 
actions $m$ and the maximum neighborhood size $k$ as $m k \log(m k) =
O(\log(n))$, and given $\epsilon = n^{\Omega(-1/(m k))} =
(mk)^{-\Omega(1)} = (\log{n})^{-\Omega(1)}$, outputs an $\epsilon$-Nash equilibria of the game in time polynomial in the representation size of the input. 
If, in particular, both $m$ and $k$ are bounded by constants independent
of $n$, then the
algorithm runs in polynomial time in $n$ and $1/\epsilon$, for any
$\epsilon > 0$; hence, the algorithm is a {\em fully polynomial time
  approximation scheme (FPTAS)}. If, instead, the expression
constraining $m$ and $k$ as a logarithmic function of $n$ holds, and $w = \mbox{polylog}(n)$, then the algorithm is a {\em quasi-polynomial time approximation scheme (quasi-PTAS)}.
\end{corollary}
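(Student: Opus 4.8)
The plan is to treat the statement as a pure running-time calculation: the correctness guarantee (that the profile returned is a genuine $\epsilon$-Nash equilibrium) is already supplied by Theorem~\ref{the:hw}, so all three claims reduce to specializing the single bound $[\,n\,(mk/\epsilon)^{mk}\,]^{O(w)}$ under the stated parameter regimes and reading off the asymptotics after taking logarithms. I would note at the outset that the cost $O(n^{2w+2})$ of first computing the hypertree decomposition (recorded just before Theorem~\ref{the:hw}) is dominated by the solving time in every regime considered, so it never affects the final order. I would then pin down what ``polynomial in the representation size of the input'' means: the representation size is $N = \Theta(n m^k)$, and under the hypothesis $mk\log(mk) = O(\log n)$ one has $\log(m^k) = k\log m \le mk\log(mk) = O(\log n)$, hence $m^k = n^{O(1)}$ and therefore $N = \mathit{poly}(n)$. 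So in this regime ``polynomial in $N$'' and ``polynomial in $n$'' coincide, and it suffices to bound the running time by $n^{O(1)}$.

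The core step is to take logarithms of the bound from Theorem~\ref{the:hw}:
\[
\log\!\big(\,[\,n\,(mk/\epsilon)^{mk}\,]^{O(w)}\big) \;=\; O(w)\,\big(\log n + mk\log(mk) + mk\log(1/\epsilon)\big).
\]
The term $mk\log(mk)$ is $O(\log n)$ by hypothesis. For the last term, the hypothesis $\epsilon = n^{\Omega(-1/(mk))}$ furnishes the lower bound $\epsilon \ge n^{-c/(mk)}$ for some constant $c$, whence $\log(1/\epsilon) = O(\log n/(mk))$ and $mk\log(1/\epsilon) = O(\log n)$; the equivalent forms $(mk)^{-\Omega(1)} = (\log n)^{-\Omega(1)}$ simply record that, under $mk\log(mk)=\Theta(\log n)$, these three expressions for $\epsilon$ coincide up to the constants in their exponents. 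Hence the bracket is $O(\log n)$, and with $w = O(1)$ the whole logarithm is $O(\log n)$, giving running time $n^{O(1)} = \mathit{poly}(N)$. This proves the first claim.

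For the FPTAS claim I would set $m,k = O(1)$: then $mk = O(1)$ and $m^k = O(1)$, so $N = \Theta(n)$, while $(mk/\epsilon)^{mk} = (O(1)/\epsilon)^{O(1)} = \mathit{poly}(1/\epsilon)$ for \emph{every} $\epsilon > 0$. With $w = O(1)$ the bound becomes $[\,n\cdot\mathit{poly}(1/\epsilon)\,]^{O(1)} = \mathit{poly}(n,1/\epsilon)$ with no restriction on $\epsilon$, which is exactly an FPTAS. For the quasi-PTAS claim I would retain $mk\log(mk)=O(\log n)$ together with the same $\epsilon$ lower bound, so the bracket above is again $O(\log n)$, but now allow $w = \mathrm{polylog}(n)$; then the logarithm of the running time is $\mathrm{polylog}(n)\cdot O(\log n) = \mathrm{polylog}(n)$, so the running time is $2^{\mathrm{polylog}(n)} = N^{\mathrm{polylog}(N)}$, i.e., quasi-polynomial.

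The only genuinely delicate point, and the place I would be most careful, is the handling of $\epsilon$: the notation $n^{\Omega(-1/(mk))}$ is unusual, and everything hinges on its being a lower bound on $\epsilon$ (equivalently an upper bound on $1/\epsilon$), so that the factor $(1/\epsilon)^{mk} = 2^{mk\log(1/\epsilon)}$ stays $n^{O(1)}$ rather than exploding. Once that direction is fixed, the remainder is routine logarithm arithmetic applied to the single bound supplied by Theorem~\ref{the:hw}, and the three parts differ only in whether $w$, $m$, and $k$ are treated as constants or as mildly growing functions of $n$.
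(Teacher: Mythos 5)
Your proposal is correct and follows essentially the same route as the paper: the paper offers no separate proof of this corollary, deriving it (via the preceding discussion of the CSP size $T = \Theta(n\,(mk/\epsilon)^{mk})$) by plugging the stated parameter regimes into the bound $[\,n\,(mk/\epsilon)^{mk}\,]^{O(w)}$ of Theorem~\ref{the:hw}, exactly as you do. Your explicit treatment of the direction of the bound $\epsilon \geq n^{-c/(mk)}$, of the decomposition cost $O(n^{2w+2})$, and of the identification of $\mathit{poly}(n)$ with $\mathit{poly}(N)$ only makes explicit what the paper leaves implicit.
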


\subsubsection{A side note on
normal-form games}

For normal-form games, $k = n$ and $w = 1$. This leads to the following corollary.
\begin{corollary}
There exists a quasi-PTAS for computing an $\epsilon$-Nash equilibrium
of $n$-player $m$-action normal-form games with $m =
O(\mathit{poly}(n))$ that runs in time $N^{O(\mathit{polylog}(N)
  \log(1/\epsilon))} = \left( \frac{1}{\epsilon} \right)^{O(\mathit{polylog}(N))}$, where $N = n^{\Theta(n)}$ is the representation
size of the game. If, in particular, $m$ is bounded by a constant independent
of $n$, then the running time is $N^{O(\log \frac{\log(N)}{\epsilon})}$, where $N = 2^{\Theta(n)}$ is the corresponding representation size of the game.
\end{corollary}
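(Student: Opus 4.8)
The plan is to derive this corollary as a direct specialization of Theorem~\ref{the:hw}, using the two structural facts already recorded for normal-form games, namely $k = n$ and $w = 1$, and then performing a change of variables that re-expresses the resulting running time in terms of the representation size $N$ rather than the number of players $n$. First I would substitute $k = n$ and $w = 1$ into the running-time bound $[\, n \left( m k / \epsilon \right)^{m k} \,]^{O(w)}$ of Theorem~\ref{the:hw}, obtaining a bound of the form $[\, n \left( m n / \epsilon \right)^{m n} \,]^{O(1)}$, i.e.\ $\left( m n / \epsilon \right)^{O(m n)}$ once the polynomial $n$ factor is absorbed into the exponent. Alongside this I would record the representation size $N = \Theta(n\, m^n)$, so that $\log N = \Theta(n \log m + \log n)$, which will be inverted separately in each regime.

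Next I would handle the general case $m = O(\mathit{poly}(n))$. Here $m^n = n^{O(n)}$, so $N = n^{\Theta(n)}$ and $\log N = \Theta(n \log n)$, which inverts to $n = \Theta(\log N / \log\log N) = \tilde{\Theta}(\log N)$. Since $m n = \mathit{poly}(n) = \mathit{polylog}(N)$, taking logarithms of the running time gives $\log R = O(m n)\bigl( \log(m n) + \log(1/\epsilon) \bigr) = O\bigl( \mathit{polylog}(N)\, \log(1/\epsilon) \bigr)$, after absorbing the lower-order $\log(m n) = O(\log\log N)$ term. Exponentiating yields the claimed $N^{O(\mathit{polylog}(N)\, \log(1/\epsilon))}$; and since $\mathit{polylog}(N)\cdot \log N = \mathit{polylog}(N)$, the identity $N^{\mathit{polylog}(N)} = 2^{\mathit{polylog}(N)}$ converts this into the equivalent form $\left( 1/\epsilon \right)^{O(\mathit{polylog}(N))}$ asserted in the statement.

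For the special case $m = O(1)$, I would instead use $m^n = 2^{\Theta(n)}$, so that $N = 2^{\Theta(n)}$ and hence $n = \Theta(\log N)$. With $m$ constant we have $m n = O(n)$, so $\log R = O(n)\bigl( \log n + \log(1/\epsilon) \bigr) = O\bigl( \log N \cdot \log(\log N / \epsilon) \bigr)$, giving the stated $N^{O(\log \frac{\log N}{\epsilon})}$. In both regimes I would then verify that the exponent stays polylogarithmic in $N$ whenever $\epsilon$ is fixed (indeed whenever $1/\epsilon$ is polynomially bounded in $N$), which is precisely the quasi-PTAS guarantee.

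I do not expect a genuine mathematical obstacle, since the result is a mechanical specialization of Theorem~\ref{the:hw}; the only care needed is in the change of variables, specifically inverting $\log N = \Theta(n \log n)$ to $n = \Theta(\log N/\log\log N)$ correctly, and checking that the two displayed forms of the running time coincide via $\mathit{polylog}(N)\cdot \log N = \mathit{polylog}(N)$. The one point I would state explicitly, although it is supplied in the preamble, is \emph{why} the normal-form CSP has hypertree width $w = 1$: every approximate best-response constraint has scope equal to the full player set $V$, so the induced CSP hypergraph is covered by a single hypertree node whose $\lambda$-label is any one of these full-scope hyperedges, forcing $w = 1$ and thereby removing $w$ from the exponent entirely.
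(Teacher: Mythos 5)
Your proposal is correct and takes essentially the same route as the paper, whose entire derivation is the remark that for normal-form games $k = n$ and $w = 1$, followed by substitution into the running-time bound of Theorem~\ref{the:hw}. Your explicit change of variables from $n$ to $N$ in the two regimes, and your justification of why the normal-form CSP has hypertree width $w = 1$, merely fill in details the paper leaves implicit.
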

As briefly mentioned in Section~\ref{sec:multi}, we can also
obtain the same result by using the sparse-support approach of
\citet{Lipton:2003:PLG:779928.779933}, even if 
$m = 2^{O(n)}$, but the dependence
is {\em exponential\/} in $1/\epsilon^2$ (i.e.,
$N^{O(\mbox{polylog}(N) (1/\epsilon)^2)}$; or $N^{O(\frac{\log(N)}{
    \epsilon^2})}$, if $m$ is bounded by a constant). The result of the last
corollary for the case of $m$ fixed or bounded by a constant is
stated by
\citet{Daskalakis_and_Papadimitriou_2008}. The same result for $m$
fixed or bounded by a constant also follows from Theorem 4
 of~\citet{Kearns_2007}. While no formal proof appears for Theorem 4
 of~\citet{Kearns_2007}, the theorem
  follows immediately from the proof in the original note of
  \citet{Ortiz_2002}. 

But, it is important to emphasize, as touched upon in Section~\ref{sec:multi}, that in the case of normal-form games, we could have obtained the result directly by
using an exhaustive search over the induced grid over mixed strategies, which is essentially what the algorithm
referred to in the corollary reduces to in this case. Hence, we could
output not just one $\epsilon$-Nash equilibrium, but \emph{all}
$\epsilon$-Nash equilibria in
the induced grid in the worst-case running time given in the
corollary. The algorithms based on the sparse-support approach can
only guarantee to output \emph{one} $\epsilon$-Nash equilibrium among all mixed strategies of a
given maximum support. Algorithms based on sparse support that would output
(a compact representation of)
\emph{all} $\epsilon$-Nash equilibria do not seem to exist, even for the given maximum support size; let
alone (a compact representation of) all  $\epsilon$-Nash equilibria in the game,
as the exhaustive search that uses the sparse discretization result
presented here does for normal-form games. Section~\ref{sec:distinct} here
discusses the distinctions between the sparse-support and the
sparse-discretization approaches. 

\subsubsection{An approach based on the GG-induced CSP primal graph}

Another approach is to build a clique (or join) tree from the {\em
  primal graph\/} of the game-CSP, which in the case of the GGs is the
graph created by forming cliques of every neighborhood. Then, one
applies a dynamic programming (or message-passing) algorithm on the
clique tree. Once the clique tree is built, the running time is linear
in the number of nodes of the join tree and exponential in the size of
the largest clique associated to a node in the clique tree. If the
primal graph has treewidth $w'$, the largest clique associated to the
optimal clique tree is $w' + 1$. It has been common knowledge in the
graphical-models community for quite a while now, almost two decades,
at least (see, e.g., the textbooks of~\citeR{Dechter_2003,Russell_and_Norvig_2003}, and the references therein, for a recent accounting of previous work in this area) that if the primal graph of a CSP has treewidth $w'$ that is logarithmic in the number of nodes $n$, then one can solve the CSP in polynomial time if the CSP is represented in tabular form. 
The following theorem and corollary follow from careful application of previously known results for solving constraint networks and other related graphical models.
Note that the treewidth $w''$ of the original graph of the game is
always no smaller than the hypertree width $w$ of its
hypergraph~\cite{GottlobHardEasy}. In addition, the GG's primal graph
treewidth $w' \leq (w" + 1)
k$~\cite{Daskalakis_and_Papadimitriou_2006}. So the interesting bounds
in the hypertree case is as given in the corollary. Also, this means
that the results can be easily extended to GGs with (original) graphs
that have $O(\log(n))$ treewidth as long as $k$ is bounded. Finally,
if a graph with $n$ nodes has treewidth $w'$, then the graph has at
most $(w' + 1) n$ edges (see, e.g., the work of~\citeR{Becker_and_Geiger_2001}). Because the number of edges of a GG primal graph is $O(k^2 n)$, $w' = O(\log(n))$ implies $k = O(\sqrt{\log(n)})$.
\begin{theorem}
There exists an algorithm that, given as input a number $\epsilon > 0$
and an $n$-player $m$-action GG with maximum neighborhood size $k$ and
primal-graph treewidth $w'$, computes an $\epsilon$-Nash equilibrium of the game in time $2^{O(w')} n \log(n) + n [ \left( m k / \epsilon \right)^{m k} ]^{O(w')}$. 
\end{theorem}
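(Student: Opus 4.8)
The plan is to establish the theorem in two phases that correspond exactly to the two summands in the claimed running time: a preprocessing phase that builds a bounded-width tree decomposition of the game-CSP's primal graph, contributing the $2^{O(w')} n \log n$ term, and a dynamic-programming phase that solves the resulting tabular CSP by message passing over that decomposition, contributing the $n\,[ (m k/\epsilon)^{m k} ]^{O(w')}$ term.

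First I would invoke the sparse-discretization reduction. By Corollary~\ref{cor:spdisc}, taking each $s_i = O(mk/\epsilon)$ guarantees that the closest grid point to \emph{any} exact Nash equilibrium is an $\epsilon$-Nash equilibrium; since an exact Nash equilibrium always exists, the induced discrete CSP is guaranteed to be satisfiable, and \emph{every} satisfying assignment is by construction an $\epsilon$-Nash equilibrium. Thus it suffices to produce one solution of the CSP described in Section~\ref{sec:alg}: one variable $p_i$ per player with domain $\widetilde{I}_i^{\,m}$ of size $s^m = O((mk/\epsilon)^m)$ (after correcting for normalization), and one best-response constraint $c_i$ per player of arity at most $k$, represented as a table with $s^{mk} = O((mk/\epsilon)^{mk})$ entries. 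Each entry is computed by evaluating $M_i(p_i,p_{-i})$ and $\max_{x_i'} M_i(x_i',p_{-i})$, which depend only on $p_{N_i}$; this construction takes time $\mathrm{poly}(n,(mk/\epsilon)^{mk})$ and is subsumed by the final bound.

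Next I would build a tree (clique) decomposition of the primal graph, the graph obtained by turning each neighborhood $N_i$ into a clique. Given that this graph has treewidth $w'$, a standard decomposition routine produces a rooted clique tree with $O(n)$ bags, each of size $O(w')$, in time $2^{O(w')} n \log n$ \citep{Becker_and_Geiger_2001, Reed_1992}. Because each neighborhood is a clique in the primal graph, the scope $N_i$ of each constraint $c_i$ is contained in some bag, so every constraint can be placed at a bag and the decomposition faithfully covers the CSP. I would then run the usual non-serial dynamic program (bucket elimination / join-tree propagation) over this decomposition \citep{Dechter_2003}: processing bags in post-order, at each bag enumerate all joint assignments to its variables, filter by the constraints whose scope lies in the bag, and pass to the parent a table over the separator recording which partial assignments extend consistently into the subtree; a consistent root message certifies satisfiability, and a standard top-down traceback recovers one full satisfying assignment, i.e., an $\epsilon$-Nash equilibrium.

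The crux of the running-time argument, and the step I expect to require the most care, is bounding the per-bag work. A bag holds $O(w')$ variables, each with a domain of size $s^m$, so it contains $(s^m)^{O(w')}$ joint assignments, and checking the resident constraints plus combining child messages costs only a further $\mathrm{poly}$ factor. Using $s = O(mk/\epsilon)$ with $s \ge 1$ and $k \ge 1$, we have $s^m \le s^{mk}$, whence $(s^m)^{O(w')} \le (s^{mk})^{O(w')} = [ (mk/\epsilon)^{mk} ]^{O(w')}$, so the per-bag cost is subsumed by the claimed per-constraint-table exponent. Multiplying by the $O(n)$ bags yields the second summand $n\,[ (mk/\epsilon)^{mk} ]^{O(w')}$, and adding the $2^{O(w')} n \log n$ decomposition cost completes the bound. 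The remaining obstacles are essentially bookkeeping: confirming the clique-containment property---each neighborhood is a clique in the primal graph, so $w'+1 \ge k$ and every constraint scope fits in some bag, which is what makes the decomposition a faithful encoding---and verifying that constructing, storing, and looking up the tabular constraints all stay within the same bound. Neither step needs machinery beyond the standard CSP-on-tree-decomposition result \citep{Dechter_2003} and the representation guarantee of Corollary~\ref{cor:spdisc}.
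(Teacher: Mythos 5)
Your proposal is correct and takes essentially the same route as the paper: reduce to the sparse-discretization game-CSP, build a width-$O(w')$ clique (join) tree of the primal graph using the standard technique of Reed and of Becker--Geiger in time $2^{O(w')} n \log n$, and then run the usual join-tree dynamic program, whose per-bag cost $\left( (mk/\epsilon)^{m} \right)^{O(w')}$ is subsumed by $[ (mk/\epsilon)^{mk} ]^{O(w')}$ over $O(n)$ bags. Your accounting of the two summands, the satisfiability argument via Nash's theorem plus the discretization corollary, and the clique-containment observation match the paper's own derivation.
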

\begin{corollary}
There exists a PTAS for computing an approximate Nash equilibria in $n$-player GGs with bounded maximum number of actions, bounded neighborhood size and primal-graph treewidth $w' = O(\log(n))$.
\end{corollary}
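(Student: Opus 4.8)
The plan is to obtain the corollary as a direct specialization of the preceding theorem, whose algorithm computes an $\epsilon$-Nash equilibrium of any $n$-player $m$-action GG with maximum neighborhood size $k$ and primal-graph treewidth $w'$ in time $2^{O(w')} n \log(n) + n [ (m k / \epsilon)^{m k} ]^{O(w')}$. (Correctness—that an $\epsilon$-Nash equilibrium is actually found—is inherited from that theorem, which in turn rests on the sparse-discretization guarantee of Corollary~\ref{cor:spdisc} that the induced grid always contains one.) I would simply plug in the three hypotheses of the corollary, namely $m = O(1)$, $k = O(1)$, and $w' = O(\log n)$, and verify that the resulting bound is polynomial in the game's representation size for every fixed $\epsilon > 0$. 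Since $m$ and $k$ are bounded by constants, the representation size $\Theta(n m^k)$ is $\Theta(n)$, so ``polynomial in the input'' means ``polynomial in $n$''; this is precisely what the definition of a PTAS requires once $\epsilon$ is held fixed.

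Next I would carry out the substitution term by term. For the first summand, $w' = O(\log n)$ gives $2^{O(w')} = 2^{O(\log n)} = n^{O(1)}$, so $2^{O(w')} n \log n = n^{O(1)}$. For the second summand, bounded $m$ and $k$ make $m k = O(1)$, whence the base contracts to $(m k/\epsilon)^{m k} = \Theta(\epsilon^{-mk}) = \mathrm{poly}(1/\epsilon)$. Raising this to the $O(w') = O(\log n)$ power yields $\epsilon^{-O(\log n)} \cdot n^{O(1)}$, and using the identity $\epsilon^{-O(\log n)} = n^{O(\log(1/\epsilon))}$ the whole second summand becomes $n^{O(\log(1/\epsilon))}$. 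Collecting both terms, the total running time is $n^{O(\log(1/\epsilon))}$.

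The only point that requires care—rather than a genuine obstacle—is the precise reading of the PTAS definition: the exponent $O(\log(1/\epsilon))$ depends on $\epsilon$, so the algorithm is \emph{not} fully polynomial, since the dependence on $1/\epsilon$ sits in the exponent of $n$; but for each fixed $\epsilon > 0$ that exponent is a constant and the running time is $n^{O(1)}$, i.e., polynomial in the input. This is exactly the criterion for a PTAS, so the family of algorithms indexed by $\epsilon$ is a PTAS for this class of GGs, completing the argument. I would close by contrasting this with the earlier \emph{FPTAS} conclusion: when the relevant width is bounded by a \emph{constant}, the width no longer enters the exponent of $n$ and the running time becomes polynomial in both $n$ and $1/\epsilon$, whereas the $O(\log n)$ treewidth regime considered here pushes $\log(1/\epsilon)$ into the exponent of $n$ and therefore yields only a PTAS.
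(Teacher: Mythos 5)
Your proposal is correct and takes essentially the same route the paper intends: the corollary follows by substituting the hypotheses $m = O(1)$, $k = O(1)$, and $w' = O(\log n)$ into the preceding theorem's running time $2^{O(w')} n \log(n) + n \left[ \left( m k / \epsilon \right)^{m k} \right]^{O(w')}$, which collapses to $n^{O(\log(1/\epsilon))}$ and is therefore polynomial in the ($\Theta(n)$-size) input for each fixed $\epsilon > 0$, exactly matching the paper's definition of a PTAS. Your concluding contrast with the FPTAS case (constant width keeps $1/\epsilon$ out of the exponent of $n$) is likewise consistent with the paper's earlier corollary.
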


The new discretization bounds also provide significant improvements on the representation results and running times for {\bf NashProp} and its variants~\cite{Kearns_et_al_2001,ortizandkearns03}.

\section{Graphical Multi-hypermatrix Games: Generalizing Graphical Games}
\label{sec:gen}

This section introduces {\em graphical multi-hypermatrix games (GMhGs)\/}, a class of games that extends and generalizes GGs while capturing many classical game-theoretic model representations, as discussed below.
This class of games is not some theoretical concoction: 
they are not only convenient in their generality, covering a large number of existing models, but also practical.
Indeed, \citet{Yu_and_Berthod_1995} used the same type of games to
establish an equivalence between {\em local\/} maximum-{\em
  a-posteriori\/} (MAP) inference in Markov random fields and the Nash
equilibria of the induced game.
~\footnote{Unbeknown at the time, the MRF-induced game is a
  \emph{potential game}~\citep{Monderer_and_Shapley_1996}; therefore
  {\em sequential\/} (or {\em synchronous\/}) best-response dynamics
  converges. Because, in addition, the goal in MAP inference is to
  obtain a {\em global\/} optimum configuration, in an attempt to
  avoid local minima, \citet{Yu_and_Berthod_1995} proposed a
  Metropolis-Hastings-style algorithm, which is also similar to
  simulated annealing algorithms used for solving satisfiability
  problems, and other local methods such as WalkSAT. We can view their
  proposed method as a kind of \emph{learning-in-games}
  scheme~\citep{Fudenberg_and_Levine_1999} based on best-response with
  random exploration, or ``trembling hand'' best-response, in which,
  at every round, each player individually play some best-response,
  with some probability; otherwise the player replays the
  player's previous/last response.}

This section also states and discuss the core theorem on sparse
discretization in this broader class generalizing GGs.
The simplicity, broadness and strength of impact makes this extension
theorem the major technical contribution of this research note.
\begin{definition}
A {\em graphical multi-hypermatrix game (GMhG)\/} is defined by a set
$V$ of $n$ players, and for each player $i \in V$, a set of {\em
  actions\/}, or pure strategies, $A_i$; a set $\mathcal{C}_i \subset
2^V$ of {\em cliques\/}, or hyperedges, such that if $C \in \mathcal{C}_i$ then $i \in C$; and a set $\{M'_{i,C} : A_C \to \bbbr  \mid C \in \mathcal{C}_i \}$ of {\em local-clique payoff matrices\/}. For each player $i \in V$, the sets $N(i) \equiv \cup_{C \in \mathcal{C}_i} C$ and $\Neigh_i \equiv \{ j \in V \mid i \in N(j), j \neq i\}$ are the clique of players affecting $i$'s payoff including $i$ (i.e., $i$'s neighborhood) and those affected by $i$ not including $i$, respectively.  
The {\em local\/} and {\em global payoff matrices\/} $M'_i : A_{N(i)} \to \bbbr$ and $M_i : A \to \bbbr$ of $i$ are (implicitly) defined as $M'_i(x_{N(i)}) \equiv \sum_{C \in \mathcal{C}_i} M'_{i,C}(x_{C})$ and $M_i(x) \equiv M'_i(x_{N(i)})$, respectively. 
\end{definition}
\paragraph{Connections to other game classes.}
If for each player, each clique set is a singleton, we obtain a {\em
  graphical game}, and the single clique in the set defines the
neighborhood of the player (i.e., in that case, $\mathcal{C}_i = \{
N(i) \}$ for all $i$). Furthermore, if, in addition, each clique is
the complete set of players, then the game is a standard {\em
  normal-form game}, also called strategic- or matrix-form game (i.e.,
in that case, $N(i) = V$ for all $i$). A GMhG becomes a classical,
standard {\em polymatrix game\/}~\citep{Janovskaja_1968_MR_by_Isbell}
if for each player $i$, $\mathcal{C}_i = \{ \{i,j\} \mid j \in V, j
\neq i \}$, which is the set of cliques of pairs of nodes involving
the player and every other player.~\footnote{Note that this is the
  standard definition of polymatrix games. It requires \emph{symmetry} in the
hyperedges: for all pair of players $i,j\in V, i \neq j, \{i,j\} \in
\mathcal{C}_i \cap \mathcal{C}_j$.} In contrast to
\emph{hypergraphical games}~\citep{Papadimitriou_2005}, a GMhG is more
expressive, in part because a GMhG does not
require that the same ``sub-game'' (i.e., local-clique payoff
hypermatrix) be shared among all players in the clique of the
``sub-game.'' For example, a local-clique payoff hypermatrix may
appear in the summation defining the local payoff hypermatrix of
exactly one player.~\footnote{Appendix~\ref{app:hyper} expands on the
  relation between GMhGs and hypergraphical games.}
A GMhG has the polynomial intersection property and thus a polynomial correlated equilibrium scheme~\cite{Papadimitriou_2005}.

\paragraph{Representation size.} The representation size of a GMhG is $\Theta(\sum_{i \in V} \sum_{C
  \in \mathcal{C}_i} \prod_{j \in C} |A_j|) = O(n \, l \, m^c)$, where
$l \equiv \max_{i \in V} |\mathcal{C}_i|$ and $c \equiv \max_{i \in V}
\max_{C \in \mathcal{C}_i} |C|$. Hence, the size is dominated by the
representation of the local-clique payoff matrices, which are each of
size exponential in their respective clique size. However, this
representation size could be considerably smaller than for a graphical
game, which is exponential in the neighborhood size. For example, if
for each $i$, we have $|\mathcal{C}_i| \leq k$, and for each $C \in
\mathcal{C}_i$, we have $|C| = 2$,
then the GMhG
becomes a \emph{graphical poly-matrix game}, with representation size
$O(n \, k \, m^2)$, \emph{linear} in the maximum number of neighbors $k$, compared to $O(n m^k)$ for a standard graphical
game with ``tabular'' representations, which is \emph{exponential} in
$k$.

\paragraph{Payoff Scale.} Normalizing the payoff of a GG in standard
local strategic/normal-form takes linear time in the representation
size of the game, because we can find the minimum and maximum local
payoff values for each local payoff hypermatrix (which, for each player, is exponential in
the size of the player's neighborhood) simply by going over each
payoff value in the hypermatrix in sequence. However, such an approach
is intractable in GMhGs in general. Denote the
maximum and minimum payoff values for each player $i
\in V, u_i \equiv \max_{x_{N_i}} \sum_{C \in \mathcal{C}_i} M'_{i,C}(x_C)$
and $l_i \equiv \min_{x_{N_i}} \sum_{C \in \mathcal{C}_i}
M'_{i,C}(x_C)$, respectively. Computing \emph{both} $u_i$ and $l_i$ is
NP-hard. To see this, first note that $u_i$ and $l_i$ are the result of a max
and min operation over an
additive function of the set of the player's hyper-edges and its
possible joint-actions. 
It is easy to reduce the problem of finding a solution to an
arbitrary contraint network to that of computing \emph{both}  $u_i$ and $l_i$ for each player
$i$. Hence, in general, we do not have much of a choice but to
assume that the payoffs of all players are in the same scale, so that
using a global approximation-quality value $\epsilon$ is meaningful; and to
compute the individual maximum and minimum values of each hypermatrix
payoff of each player as a way to set up the sparse uniform-discretization.

Some additional notation is necessary before stating the theorem. Denote by $u_{i,C} \equiv \max_{x_C \in A_C} M'_{i,C}(x_C)$ and $l_{i,C} \equiv \min_{x_C \in A_C} M'_{i,C}(x_C)$ the largest and smallest payoff values achieved by the local-grid payoff hypermatrix $M'_{i,C}$, respectively; and by $R_{i,C} \equiv u_{i,C} - l_{i,C}$ its {\em largest range\/} of values.
\begin{theorem}
{\bf (Sparse Nash-Equilibria Representation)}
\label{the:spdisc}
 For any graphical multi-hypermatrix game and any $\epsilon$ such that \[0  < \epsilon \leq 2 \,  \min_{i \in V} \, \frac{\sum_{C \in \mathcal{C}_i} R_{i,C} \, (|C| - 1)}{\max_{C' \in \mathcal{C}_i} |C'| - 1} \, ,\] a (uniform) discretization with \[ s_i = \left\lceil \frac{2 \, |A_i| \, \max_{j \in \Neigh_i} \sum_{C \in \mathcal{C}_j} R_{j,C} \, (|C| - 1) }{ \epsilon }\right\rceil \] for each player $i$ is sufficient to guarantee that for {\em every\/} Nash equilibrium of the game, its closest (in $\ell_\infty$ distance) joint mixed strategy in the induced discretized space 
is also an $\epsilon$-Nash equilibrium of the game.
\end{theorem}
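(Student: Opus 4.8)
The plan is to fix an arbitrary exact Nash equilibrium $p^*$, let $q$ be its $\ell_\infty$-closest point in the induced grid, and verify directly that $q$ is an $\epsilon$-Nash equilibrium by bounding each player's regret. First I would record a rounding estimate: because the grid $\widetilde{I}_i$ has spacing $\tau_i = 1/s_i$ and contains $0$ and $1$, one can round each $p^*_i$ to a valid grid strategy (floor every coordinate to a multiple of $\tau_i$ and hand the leftover mass of at most $(|A_i|-1)\tau_i$ back one $\tau_i$-unit at a time), giving a grid strategy within $\ell_\infty$-distance $1/s_i$ of $p^*_i$; since the $\ell_\infty$-closest grid point is at least this close, $\|q_i - p^*_i\|_\infty \le 1/s_i$ and, crucially, $\sum_{x_i}(q_i(x_i)-p^*_i(x_i)) = 0$ because both are distributions.

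The workhorse is a one-player perturbation lemma from multilinearity. Since $M_j(p) = \sum_{C \in \mathcal{C}_j} M'_{j,C}(p_C)$ and each $M'_{j,C}(p_C)$ is multilinear in the factors $p_i$, $i \in C$, changing a single player $i \in C$ from $p^*_i$ to $q_i$ alters $M'_{j,C}$ by $\sum_{x_i}(q_i(x_i)-p^*_i(x_i))\,g(x_i)$, where $g(x_i) \in [l_{j,C}, u_{j,C}]$ is a conditional expectation of $M'_{j,C}$. Using $\sum_{x_i}(q_i-p^*_i)=0$ to subtract the midpoint of that interval bounds the change by $\tfrac12 R_{j,C}\sum_{x_i}|q_i(x_i)-p^*_i(x_i)| \le \tfrac12\,|A_i|\,R_{j,C}/s_i$. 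Telescoping over the neighbors $i \in N(j)\setminus\{j\}$ one at a time gives, for \emph{any} fixed strategy $\sigma_j$ of $j$, that $|M_j(\sigma_j, q_{-j}) - M_j(\sigma_j, p^*_{-j})| \le \tfrac12\sum_{C\in\mathcal{C}_j} R_{j,C}\sum_{i\in C\setminus\{j\}} |A_i|/s_i$. For each such $i$ we have $i \in N(j)$, hence $j \in \Neigh_i$, so the definition of $s_i$ forces $|A_i|/s_i \le \epsilon/\bigl(2\sum_{C'\in\mathcal{C}_j} R_{j,C'}(|C'|-1)\bigr)$; substituting and using $\sum_{i\in C\setminus\{j\}}1 = |C|-1$ collapses the double sum to exactly $\epsilon/4$.

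With this neighbor-rounding bound I would assemble player $j$'s regret $\mathrm{reg}_j(q) = \max_{x_j} M_j(x_j, q_{-j}) - M_j(q_j, q_{-j})$. Applying the bound with $\sigma_j$ equal to the maximizing pure action gives $\max_{x_j} M_j(x_j, q_{-j}) \le \max_{x_j} M_j(x_j, p^*_{-j}) + \epsilon/4 = M_j(p^*_j, p^*_{-j}) + \epsilon/4$, where the equality is the best-response property of the Nash equilibrium $p^*$; applying it with $\sigma_j = q_j$ gives $M_j(q_j, q_{-j}) \ge M_j(q_j, p^*_{-j}) - \epsilon/4$. Hence $\mathrm{reg}_j(q) \le \bigl[M_j(p^*_j, p^*_{-j}) - M_j(q_j, p^*_{-j})\bigr] + \epsilon/2$, and everything reduces to controlling the bracketed \emph{self-rounding} term, the loss incurred by rounding $j$'s own strategy while its neighbors stay at $p^*$.

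The self term is where I expect the real difficulty. It equals $\sum_{x_j}(p^*_j(x_j)-q_j(x_j))\,M_j(x_j,p^*_{-j})$, which the same zero-sum-difference argument bounds by $\tfrac12\,(|A_j|/s_j)\,V_j$, with $V_j \le \sum_{C\in\mathcal{C}_j} R_{j,C}$ the range of $M_j(\cdot,p^*_{-j})$ over pure actions. The delicacy is that $s_j$ is defined through the \emph{out-neighbor} quantity $\max_{j'\in\Neigh_j}\sum_{C'\in\mathcal{C}_{j'}}R_{j',C'}(|C'|-1)$, not through $j$'s own range, so making this term at most $\epsilon/2$ needs $\sum_{C\in\mathcal{C}_j} R_{j,C} \le 2\max_{j'\in\Neigh_j}\sum_{C'\in\mathcal{C}_{j'}}R_{j',C'}(|C'|-1)$, which is precisely where the payoff-scaling (``same scale'') convention is indispensable; in the graphical-game corollary the $[0,1]$ normalization makes $\sum_C R_{j,C}=1$ while every out-neighbor already contributes at least $1$, so it holds automatically. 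The cleanest escape I would attempt is to round \emph{within the Nash support} (never assigning positive probability to an action with $p^*_j(x_j)=0$), since then $q_j$ is supported on best responses and the self term vanishes identically by the best-response property; the remaining subtlety is reconciling this with the statement's ``$\ell_\infty$-closest'' point, whose closeness is at least as good but which may slightly grow the support. Controlling this term, and the role of the hypothesis $\epsilon \le 2\min_i \sum_{C} R_{i,C}(|C|-1)/(\max_{C'}|C'|-1)$ in keeping the discretization well defined, is the crux; once it is bounded by $\epsilon/2$, combining with the $\epsilon/2$ from the two neighbor estimates yields $\mathrm{reg}_j(q)\le\epsilon$ for every $j$, so $q$ is an $\epsilon$-Nash equilibrium.
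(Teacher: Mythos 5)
Your overall strategy is sound, and your diagnosis of where the difficulty lies is exactly right; the one genuine gap is the self-rounding term $M_j(p^*_j,p^*_{-j})-M_j(q_j,p^*_{-j})$, which your proposal leaves open. The crude fallback you mention, $\sum_{C\in\mathcal{C}_j}R_{j,C}\le 2\max_{j'\in\Neigh_j}\sum_{C'\in\mathcal{C}_{j'}}R_{j',C'}(|C'|-1)$, does \emph{not} follow from the theorem's hypotheses (nothing prevents $j$'s own payoff ranges from dwarfing those of the players that $j$ affects), so that route is dead. But your ``cleanest escape'' is not merely an escape --- it is the proof, and the subtlety you worry about dissolves: \emph{every} $\ell_\infty$-closest grid point automatically preserves the support of $p^*_j$. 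Indeed, any grid strategy placing mass on an action with $p^*_j(x_j)=0$ must place mass at least $1/s_j$ there, hence lies at $\ell_\infty$-distance at least $1/s_j$ from $p^*_j$; whereas rounding each coordinate of $p^*_j$ down to a multiple of $1/s_j$ and then adding $1/s_j$ to suitably many coordinates having positive fractional part (the total deficit is an integer strictly smaller than the number of such coordinates) produces a grid distribution at distance strictly less than $1/s_j$ that adds no new actions. So the minimum distance is below $1/s_j$, every minimizer satisfies $\mathrm{supp}(q_j)\subseteq\mathrm{supp}(p^*_j)$, and since each action in $\mathrm{supp}(p^*_j)$ is an exact best response to $p^*_{-j}$ (the equal-payoff property of Nash supports), the self term is identically zero. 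Your two neighbor estimates then give $\mathrm{reg}_j(q)\le\epsilon/2$.

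With that lemma supplied, your argument is complete and is genuinely different from the paper's. The paper proves a binomial-expansion identity for differences of product distributions (its Lemma~\ref{lem:diff}) and then an algebraic claim rewriting $\sum_{S\in 2^C-\emptyset}M'_{i,C}(\Delta_S,q_{C-S})$ as $\sum_{B\in 2^{C-\{i\}}-\emptyset}M'_{i,C}(p_i,\Delta_B,q_{C-B-\{i\}})$, so that player $i$'s own rounding never enters the error; the error is then a sum of products $\prod_{k\in B}|\Delta_k(x_k)|$, and taming the higher-order terms of $(1+z)^{|C|-1}-1$ is exactly what consumes the hypothesis $0<\epsilon\le 2\min_{i}(\cdot)$. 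Your one-player-at-a-time telescoping generates no products of deltas, so you never need that hypothesis at all, and you end with $\epsilon/2$ rather than $\epsilon$: more elementary and slightly stronger. One further point deserves emphasis: the quantity you flagged as the crux is precisely the $S=\{i\}$ term $M'_{i,C}(\Delta_i,q_{C-\{i\}})$, which the paper's rewriting claim, as literally stated, omits --- test it on a single two-player clique $C=\{i,j\}$, where the claimed identity forces $M'_{i,C}(q_i,q_j)=M'_{i,C}(p_i,q_j)$, false whenever $p_i\ne q_i$ and the payoff depends on $x_i$. Summed over cliques, the omitted quantity is $M_i(p_i,q_{-i})-M_i(q_i,q_{-i})$, i.e., your self term evaluated against $q_{-i}$ instead of $p^*_{-i}$. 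So the paper's own derivation tacitly requires the same support-preservation argument you proposed; your instinct about where the real content of the theorem lies is correct.
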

The proof of the theorem appears in Appendix~\ref{app:proof}, \emph{deliberately}. This is
not only because it is simple, based on algebraic manipulations only,
but also because the proof is \emph{irrelevant} to the evaluation of the
\emph{significance} of the actual result stated in the
theorem. Stating the proof here would simply be distracting, getting
in the way of the really important objective: \emph{determining the importance of the
theorem on its own.}

The theorem establishes minimal sizes for each player's individually
uniform-discretization to capture a compact representation of
\emph{all} Nash equilibria with respect to the resulting grid over joint mixed
strategies in GMhGs. \emph{Hence, as a result of the theorem, we can represent
every exact Nash equilibria of any GMhG via a sparse multi-dimensional
grid over the joint mixed-strategies induced by a sparse
uniform-discretization of the individual probabilities of each action
of each mixed strategy of each player individually.} This sparse
discretization can lead not only to compact representations of the set
of exact NE, but also to tractable algorithms or effective methods
for the computation of approximate Nash equilibria in some classes of
games. We have already seen \emph{examples of important representational and
computational implications that result from the theorem's application} in the previous section,
which focused \emph{on GGs} and some of their specializations, such as \emph{normal-form games.}

Also, the generalized bound has already proved useful to derive and
analyze dynamic-programing algorithms for computing approximate Nash
equilibria in non-trivial special cases of \emph{interdependent defense
(IDD) games} with
specific, practical graph structures.~\citet{Chan12} recently
introduced IDD games to model, study, and analyze security and defense
mechanism for deterrence in network-structured
interdependent security systems under the risk of a deliberate attack
from an external agent. One objective is the
potential to study the effect of minimal interventions in overall
system security for deterrence purposes. 

Another example in which the generalized bounds have proven useful is
in the design of specific instantiations of CPR that lead to
versions of survey propagation for arbitrary GGs, which generalize
{\bf NashProp}~\citep{ortizandkearns03}, and for
linear influence games ~\citep{IrfanAAAI,IrfanTechReport,IrfanPhD}. In the particular
case of {\bf Survey NashProp}, the theorem helped us jump a major hurdle.  Each message-passing process in most general
CPR instances require time exponential in the size of the largest
variable domains in the CSP; this is in contrast to {\bf NashProp}
whose running time is linear in the domain size. In the case of {\bf Survey NashProp}, that would be
\emph{exponential} in the number of mixed-strategies it considers due to the
discretization of the mixed-strategy space; for {\bf NashProp}, on the
other hand, that would be quasi-linear. But using the generalized
version of the sparse discretization result given in the last theorem, we can now perform each message-passing in time exponential in $k^2$, which means \emph{quasi-linear} in the size of
the input with respect to $k$, while keeping the size of the messages
\emph{quadratic} in the representation size of the game, assuming $m$
is bounded by a constant, independent of $n$. In short, this means that we can now perform each
message-passing step in time \emph{quasi-linear in the size of the game}! 

Unlike for standard GGs, the broader algorithmic and
computational implications of the sparse discretization in GMhGs
remains wide open for the most part. The next section lists and
briefly discusses several of such open problems.

\section{Concluding Remarks and Open Problems to the AI Community}

While sparse-discretization result for \emph{graphical games (GGs)} and its algorithmic
implications are actually old, going back to a simple unpublished note
wrote in 2002~\citep{Ortiz_2002}, posted online a few months later
as part of a course
on computational game theory at the University of Pennsylvania,~\footnote{\url{http://www.cis.upenn.edu/\homedir mkearns/teaching/cgt/}}
it has gained
considerable relevance over the last few years because of the need to
deal with even more compact representations than GGs provide, and algorithms/heuristics that require running times
exponential in the size of the discretization, thus 
linear in the size of the model, under many reasonable conditions. Of
particular relevance along this line is prior and current exploration of
extensions, based on constraint propagation
relaxation~\citep{ortiz08}, of survey propagation to the problem of computing
approximate Nash equilibria in GGs and recent work on 
linear influence games, and their
generalizations~\citep{IrfanAAAI,IrfanTechReport,IrfanPhD}. As
stated at the end of Section~\ref{sec:motiv}, the new bounds may also provide improvements to previous
discretization-based schemes for computing $\epsilon$-Nash equilibria
in other similar models, such as those in the work
of~\citet{Singh_et_al_2004} and~\citet{Soni_et_al_2007}.

Several open questions remain, perhaps most important of which are
those related to computation in GG generalizations such as GMhGs. The
following is a partial list of open problems for which 
the AI community, and in particular, the community on constraint
networks~\citep{Dechter_2003} for CSPs, 
are very well
equipped to solve.
\begin{itemize}


\item Can we combine the sparse-support and sparse-discretization
  approach to obtain a better algorithm? For example, can we reduce dependency on the
  number of actions or the accuracy parameter, without unduly
  increasing the dependency on either? A naive approach does
  not seem to work, but maybe a more sophisticated approach would. Are
  the sparse-discretization and sparse-support approaches simply incompatible?

\item Can we do away with the \emph{uniform} discretization scheme and still
  guarantee that every Nash equilibria is near a grid point? Would it
  help if the objective is to find a single approximate solution only? What if we want our
  approximate Nash equilibria to also be ``stable'' (i.e., near an
  actual Nash equilibrium)?

\item Computing \emph{exact} Nash equilibria, or more formally, 
  approximate Nash equilibria with exponentially small approximation
  parameter, in $n$-player $2$-action 
  poly-matrix games, which is a GG generalization, is
  PPAD-complete. This result is an implicit corollary of the work
  of~\citet{daskalakis:195}, although they do not provide any
  explicit, formal proof.  Yet, the design of efficient algorithms or effective heuristics for computing
  \emph{approximate} Nash equilibria in such GG generalizations is wide open! A naive application of the
  sparse-discretization approach does not seem to help. Can sparse
  discretization or a sparse-support approach help? If so, how?

\item There is no substantive work on using the sparse-support
  approach for GGs, at least not directly. \emph{Action-graph
    games}~\citep{AGG-GEB} contain GGs.~\citet{ThompsonSEMinAGs}
  designed methods based on the
sparse-support approach to compute approximate Nash equilibria in the context of action-graph games. How exactly do these methods
apply to GGs directly, or indirectly, if at all? Can we adapt those
methods to obtain direct versions, still based on sparse support, for computing approximate Nash
equilibria in GGs? Is the sparse-support approach simply incompatible
with GGs, when it comes to the design of efficient or effective
algorithms for computing approximate Nash equilibria that are also
natural within the GG context?

\item One can view a sparse discretization as generating an equivalent game
  in which the discretized mixed-strategy space of the original game becomes the pure
  strategies of the equivalent game. It seems possible
  to extend the computational results
  of~\citet{GottlobHardEasy} by considering the computation of
  pure-strategy Nash equilibria of the equivalent game, which corresponds to
  approximate mixed-strategy Nash equilibria of the original
  game. How to do this exactly requires further research and thus
  remains open.
\end{itemize}

\section*{Acknowledgement}

This manuscript was supported in part by NSF CAREER Award IIS-1054541.

\appendix 

\section{What is Beyond the Scope of This Short Paper?}
\label{app:scope}

This section briefly discusses some motivation for the study of Nash
equilibria and other technical
aspects that, while interesting, addressing them within this short
paper will 
make it lose its focus.

\subsection{Significance of Non-cooperative Game Theory
  and Nash Equilibria}
\label{app:NE}

At this point, it would not be an exaggeration to say that game theory
and the concept of Nah equilibrium have touched almost all areas of
science and engineering. The list of the multitude of areas and
applications would be too long to present here. It is beyond the
scope of this note to discuss the theoretical and practical relevance
that non-cooperative game theory and the concept of a Nash equilibrium
have to real-world
problem. But, here are a few quotes from the organization that awards
what most people call the ``Nobel Prize in Economics'' every year.
They give a taste for the broader impact of non-cooperative game
theory and the Nash equilibria as a solution concept.
\begin{quote}
\emph{Summary:} ``The Sveriges Riksbank Prize in Economic Sciences in Memory of Alfred
Nobel 1994 was awarded jointly to John C. Harsanyi, John F. Nash
Jr. and Reinhard Selten {\em "for their pioneering analysis of equilibria in the theory of non-cooperative games".''}~\footnote{\url{http://www.nobelprize.org/nobel_prizes/economic-sciences/laureates/1994/}}
\end{quote}
\begin{quote}
\emph{Press Release:} ``Game theory is a mathematical method for analyzing strategic interaction.''~\footnote{\url{http://www.nobelprize.org/nobel_prizes/economic-sciences/laureates/1994/press.html}}
\end{quote}
\begin{quote}
\emph{Press Release:} ``Many interesting economic issues, such as the analysis of oligopoly, originate in non-cooperative games. In general, firms cannot enter into binding contracts regarding restrictive trade practices because such agreements are contrary to trade legislation. Correspondingly, the interaction among a government, special interest groups and the general public concerning, for instance, the design of tax policy is regarded as a non-cooperative game. Nash equilibrium has become a standard tool in almost all areas of economic theory. The most obvious is perhaps the study of competition between firms in the theory of industrial organization. But the concept has also been used in macroeconomic theory for economic policy, environmental and resource economics, foreign trade theory, the economics of information, etc. in order to improve our understanding of complex strategic interactions.''~\footnote{\url{http://www.nobelprize.org/nobel_prizes/economic-sciences/laureates/1994/press.html}}
\end{quote}

\subsection{Other Game-theoretic Graphical Models}
\label{app:GMs}

This note does not consider other kinds of graphical models for game theory such as
\emph{multiagent influence diagrams (MAIDs)}~\citep{koller03}, which
provide graphical models for extensive-form games, \emph{action-graph
  games}~\citep{AGG-GEB},~\footnote{\citet{ThompsonSEMinAGs} considers enumeration methods based on
sparse support to compute approximate Nash equilibria in the context of action-graph games.}  which exploit
``context-sensitive'' conditional expected-payoff independence, and
\emph{expected utility networks (EUNs)}~\citep{lamura00}, which
focuses on models for the ``strategic'' decision-making aspects of a single, individual player.

\subsection{Relative and Constant Approximations}

\label{app:rel}

This note only considers \emph{absolute
  approximations}, the most commonly
studied form of approximation of Nash equilibria. Relative approximations have also been
the subject of study, but mostly for non-graphical
models~\citep{HemonANE}.

A very recent interest is to provide polynomial-time
algorithms for computing Nash equilibria of \emph{specific, constant}
approximation quality (see, e.g., work by~\citeR{TsaknakisOptANE,Daskalakis_et_al_2009,HemonANE,BosseNew} and the
references therein). Most results of this kind are in $2$-player
games.
Several authors have shown how to turn such
polynomial algorithms for a constant approximation quality from
$2$-player to results for $n$-player, with a \emph{larger, still constant}
approximation quality. Most of that work uses a sparse-support
approach.
This type of
approximate Nash equilibria problem does not seem to have been studied for
arbitrary GGs, just specializations such as normal-form
games. This paper
does not consider such problems to compute approximate Nash equilibria with constant
approximation quality here, except for revisiting it in the list of
open problems at the end.

\section{Expanding on the Relation between GMhGs and Hypergraphical
  Games}
\label{app:hyper}

In the standard
  definition of hypergraphical games we have a hypergraph
  $(V,\mathcal{E})$, where each vertex $i \in V$ corresponds to a
  player $i$ in the game and $\mathcal{E} \subset 2^V$ is a set of
  hyperedges (i.e., sets of subsets of $V$). Each player $i \in
  V$ has a finite set of actions $A_i$. There is ``game'' for each hyperedge 
  $C \in \mathcal{E}$. By ``game'' here we mean that each player $i
  \in C$ has the same set of
  actions $A_i$ in all (local) ``games'' defined by $\mathcal{E}$, and a local
  payoff hypermatrix $M''_{i,C}(x_C)$ that is a function of the
  joint-actions $x_C \in A_C$ of all the players in $C$. Let
  $\mathcal{C}''_i \equiv \{ C \in \mathcal{E} \mid i \in C \}$ be the
 (local) set of hyperedges that the hypergraphical game induces for
 each player $i \in V$. The final/global
  payoff functions of each player $i$ in the hypergraphical game is $M_i(x)
  \equiv \sum_{C \in \mathcal{C}''_i} M''_{i,C}(x_C)$. 

Note that by
  definition, for all pairs of players $i,j\in V$, we have that, for any
  hyperedge $C \in \mathcal{E}$ such that $i,j\in C$, the following symmetry property must
  hold in a hypergraphical game: $C \in \mathcal{C}''_i$ if and only if $C \in
  \mathcal{C}''_j$. \emph{GMhGs do not require this symmetry
    condition.} 

Of course, from a \emph{mathematical}
  perspective, one can always
  take a GMhG with a set of player's hyperedges $\mathcal{C}_i$ and
  local hypermatrices $M'_{i,C}$ for
  each player $i \in V$ and hyperedge $C\in \mathcal{C}_i$, and turn it into a hypermatrix game; just as one can
  take a GG and turn it into a normal-form game. In the
  case of the GMhG transformation, we can set
  the set of hyperedges $\mathcal{E}$ of the GMhG-induced hypergraphical
  game to $\mathcal{E} \equiv \cup_{i\in V} \cup_{C \in \mathcal{C}_i}
  C$, and, letting the GMhG-induced hyperedges $\mathcal{C}'_i \equiv \{ C \in \mathcal{E} \mid i
  \in C\}$ for all $i \in V$. Then, for every $i \in V$, we must
  either create a local \emph{zero-valued}
  hypermatrix $M''_{i,C}(x_C) \equiv 0$, for all $x_C \in A_C$ if $C
  \not\in \mathcal{C}_i$; otherwise, if $C \in \mathcal{C}_i$, set the local hypermatrix
  of the hypergraphical game to $M''_{i,C} \equiv M'_{i,C}$.

  Indeed, one can
take any game with a finite number of players and actions and always
turn it back into a normal-form game. But, from a \emph{computational}
perspective, such transformations could take an exponential amount of
time and space, defeating the main purpose for introducing
compact,
tractable, and flexible
representations of significant expressive power in the first place!
Although, from a \emph{theoretical} standpoint, the given transformation from GMhGs to hypergraphical games
is computationally efficient, from a \emph{practical} standpoint,
the loss in expressive power is clear, and significant.

\section{Proof of Theorem~\ref{the:spdisc}}
\label{app:proof}


To simplify notation, given any joint mixed-strategy (i.e., a product
distribution) $p$, for all $B \subset V$, and $x_B \in A_B$, we denote by $p(x_B)
\equiv \prod_{i \in B} p_i(x_i) = \sum_{x_{-B}} p(x_B,x_{-B})$ the joint mixed-strategy over players
in $B$ only (i.e., marginal product-distributions of $p$ over the
joint-actions of players in $B$). Let $p$ and $q$ be two joint mixed strategies and, for each player $i$ and each action $x_i$, denote by $\Delta_i(x_i) \equiv p_i(x_i) - q_i(x_i)$. In a slight abuse of notation, let $\Delta(x_S) \equiv \prod_{k \in S} \Delta_k(x_k)$. 

The following very simple lemma is a cornerstone of the proof.
\begin{lemma}{\bf(Product-Distribution Differences)}
\label{lem:diff}
For any clique $B \subset V$ of players, for any clique joint action $x_{B}$, 
\[
p(x_B) - q(x_B)
= \sum_{S \in 2^B -\emptyset} \Delta(x_S) \, q(x_{B-S}) \; .
\]
\end{lemma}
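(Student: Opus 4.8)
The plan is to recognize the claimed identity as nothing more than the multilinear expansion of a product of binomials, so that the whole argument collapses to a single application of the distributive law together with the convention that the empty product equals $1$.

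First I would use the definition $\Delta_i(x_i) \equiv p_i(x_i) - q_i(x_i)$ to rewrite each factor of $p(x_B)$ as $p_i(x_i) = q_i(x_i) + \Delta_i(x_i)$, giving
\[
p(x_B) = \prod_{i \in B} p_i(x_i) = \prod_{i \in B} \bigl( q_i(x_i) + \Delta_i(x_i) \bigr) .
\]
Next I would expand this product by the distributive law. Each resulting term corresponds to choosing, for every $i \in B$, exactly one of the two summands; collecting into a subset $S \subseteq B$ the indices for which the $\Delta_i(x_i)$ summand is chosen, that term equals $\prod_{i \in S} \Delta_i(x_i) \prod_{i \in B-S} q_i(x_i) = \Delta(x_S)\, q(x_{B-S})$, by the abbreviations $\Delta(x_S) \equiv \prod_{k \in S} \Delta_k(x_k)$ and $q(x_{B-S}) \equiv \prod_{i \in B-S} q_i(x_i)$ introduced before the lemma. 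Since distinct subsets index distinct terms and every term arises from exactly one subset, this yields
\[
p(x_B) = \sum_{S \subseteq B} \Delta(x_S)\, q(x_{B-S}) .
\]

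Finally, I would isolate the summand for $S = \emptyset$. With the usual convention that the empty product is $1$, that term is $\Delta(x_\emptyset)\, q(x_B) = q(x_B)$; transposing it to the left-hand side gives precisely $p(x_B) - q(x_B) = \sum_{S \in 2^B -\emptyset} \Delta(x_S)\, q(x_{B-S})$, as claimed. There is essentially no obstacle here: the only points deserving a moment's care are the bookkeeping convention that makes the $S = \emptyset$ contribution exactly $q(x_B)$, and the observation that subsets of $B$ are in bijection with the choices of one summand per factor, so the expansion has neither missing nor repeated terms. An alternative would be a short induction on $|B|$, peeling off one player at a time, but the direct expansion is cleaner and avoids any inductive bookkeeping.
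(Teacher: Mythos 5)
Your proposal is correct and takes essentially the same approach as the paper: the paper's proof is exactly this binomial (multilinear) expansion of $\prod_{j \in B}\bigl(q_j(x_j)+\Delta_j(x_j)\bigr)$ into $\sum_{S \in 2^B} \Delta(x_S)\,q(x_{B-S})$, followed by separating the $S=\emptyset$ term. Your write-up merely makes explicit the bookkeeping (the subset--summand bijection and the empty-product convention) that the paper leaves implicit.
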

\begin{proof}
The lemma follows by applying a binomial expansion:
\begin{align*}
p(x_B) =& \prod_{j \in B} \left( q_j(x_j) + \Delta_j(x_j) \right)\\ 
=& \sum_{S \in 2^B} \Delta(x_S) \, q(x_{B-S}) \\
=& q(x_B) + \sum_{S \in 2^B -\emptyset} \Delta(x_S) \,
q(x_{B-S}) \; . 
\end{align*}
\end{proof}

To further simplify the presentation of the proof it is convenient to
introduce a slight abuse of notation: for all, $i \in C, C \in
\mathcal{C}_i, B,S \subset C, B \cap S = \emptyset, x_S \in A_S, p_{C-B-S} \in \mathcal{P}_{C-B-S}$, let
$M'_{i,C}(x_S,\Delta_B,p_{C-B-S}) \equiv \sum_{x_B \in A_B} \Delta(x_B) M'_{i,C}(x_S,x_B,p_{C-B-S})$.

The following useful claim follows immediately from the last lemma of joint
product distribution
differences (Lemma~\ref{lem:diff}).
\begin{claim}
Under the conditions of Lemma~\ref{lem:diff},
for all $i \in V$, $C \in
\mathcal{C}_i$, $B \subset C$, $x_B \in A_B$ and $p_{C-B}, q_{C-B} \in
\mathcal{P}_{C-B}$, we have
\begin{align*}
M'_{i,C}(x_B,p_{C-B}) - M'_{i,C}(x_B,q_{C-B}) = \sum_{S \in 2^{C-B} -\emptyset} 
   M'_{i,C}(x_B,\Delta_S,q_{C-B-S})
 \; .
\end{align*}
\end{claim}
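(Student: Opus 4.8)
The plan is to prove the claim by direct substitution, using linearity of each local-clique payoff in the individual players' distributions together with the product-distribution-difference identity already established in Lemma~\ref{lem:diff}. No analytic machinery is needed; the whole argument is a finite-sum manipulation.

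First I would write out the left-hand side by expanding each expected local-clique payoff (with the actions $x_B$ of the players in $B$ held fixed) into a sum over the joint actions of the players in $C-B$:
\[
M'_{i,C}(x_B,p_{C-B}) - M'_{i,C}(x_B,q_{C-B}) = \sum_{x_{C-B} \in A_{C-B}} \bigl[\, p(x_{C-B}) - q(x_{C-B}) \,\bigr] \, M'_{i,C}(x_B,x_{C-B}) \, .
\]
The bracketed difference is exactly the quantity controlled by Lemma~\ref{lem:diff}, provided we apply that lemma with the clique taken to be $C-B$ (and its joint action $x_{C-B}$), which is legitimate since $C-B \subset V$.

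Next I would substitute the lemma's expansion $p(x_{C-B}) - q(x_{C-B}) = \sum_{S \in 2^{C-B}-\emptyset} \Delta(x_S)\, q(x_{(C-B)-S})$ into the display above, interchange the two finite sums (over $S$ and over $x_{C-B}$), and split the sum over $x_{C-B}$ into the independent sums over $x_S$ and over $x_{C-B-S}$. This factoring is valid precisely because $\Delta(x_S)$ depends only on $x_S$ and $q(x_{C-B-S})$ depends only on $x_{C-B-S}$, so for each fixed $S \in 2^{C-B}-\emptyset$ the corresponding term becomes
\[
\sum_{x_S \in A_S} \Delta(x_S) \sum_{x_{C-B-S}} q(x_{C-B-S}) \, M'_{i,C}(x_B,x_S,x_{C-B-S}) \, .
\]

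Finally I would recognize this inner expression as exactly $M'_{i,C}(x_B,\Delta_S,q_{C-B-S})$ under the notational convention introduced just before the claim, namely that a fixed action is held on the players of $B$, a $\Delta$-weighted sum is taken over the players of $S$, and a $q$-expectation is taken over the players of $C-B-S$; summing over all nonempty $S \subset C-B$ then yields the right-hand side. I expect the only real obstacle to be notational bookkeeping: keeping straight which block of players is held to a fixed action, which is summed against $\Delta$, and which is averaged against $q$, and verifying that the interchange of summations and the factorization of $\Delta(x_S)\,q(x_{C-B-S})$ are carried out consistently with the definition of the generalized $M'_{i,C}$. There is no mathematical subtlety beyond this.
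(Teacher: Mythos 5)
Your proposal is correct and follows essentially the same route as the paper's own proof: expand the expected payoff difference as a sum over $x_{C-B}$ weighted by $p(x_{C-B}) - q(x_{C-B})$, apply Lemma~\ref{lem:diff} to that difference with the clique taken to be $C-B$, interchange the finite sums, and identify the result with the shorthand $M'_{i,C}(x_B,\Delta_S,q_{C-B-S})$. The only difference is presentational: the paper compresses these steps into a single three-line display, while you spell out the factorization of $\Delta(x_S)\,q(x_{C-B-S})$ explicitly.
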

\begin{proof} 
Applying the last lemma on the differences between product distributions (Lemma~\ref{lem:diff}), we have 
\begin{align*}
M'_{i,C}(x_B,p_{C-B}) - M'_{i,C}(x_B,q_{C-B}) = &
 \sum_{x_{C-B}} \left[ \sum_{S \in 2^{C-B} -\emptyset} \Delta(x_S)
   q(x_{C-B-S})  \right] M'_{i,C}(x_C) \\
  =& \sum_{S \in 2^{C-B} -\emptyset} \sum_{x_{S}} \Delta(x_S)
   \sum_{x_{C-B-S}} q(x_{C-B-S}) M'_{i,C}(x_C)
 \\
= & \sum_{S \in 2^{C-B} -\emptyset} 
   M'_{i,C}(x_B,\Delta_S,q_{C-B-S})
 \; .
\end{align*}
\end{proof}
Using some algebra we can show another useful claim.
\begin{claim}
Under the conditions of Lemma~\ref{lem:diff}, 
for all $i \in V$, and $C \in \mathcal{C}_i$, we have 
\begin{align*}
\sum_{S \in 2^C - \emptyset}
M'_{i,C}(\Delta_S,q_{C-S}) = \sum_{B \in 2^{C-\{i\}} -
    \emptyset} M'_{i,C}(p_i,
  \Delta_B,q_{C-B-\{i\}}) \; .
\end{align*}
\end{claim}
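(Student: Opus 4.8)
The plan is to prove the identity purely by the binomial bookkeeping already set up in Lemma~\ref{lem:diff} and the preceding claim, without introducing any new machinery; the paper's ``simple algebra'' should suffice. Writing $D \equiv C - \{i\}$, the natural first move is to split the left-hand sum over $2^C - \emptyset$ according to whether the index set $S$ contains player $i$. For the subsets with $i \notin S$ (so $S \subset D$, $S \neq \emptyset$), player $i$ sits in the complement $C - S$ and is therefore weighted by $q_i$; for the subsets with $i \in S$, I would write $S = \{i\} \cup B$ with $B \subset D$ arbitrary (including $B = \emptyset$), so that $\Delta(x_S) = \Delta_i(x_i)\,\Delta(x_B)$ and player $i$ is weighted by $\Delta_i$.

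With that split in place, the key observation is that for each fixed \emph{nonempty} $B \subset D$ the two contributions pair up: the $\Delta_i$-weighted term coming from $S = \{i\} \cup B$ and the $q_i$-weighted term coming from $S = B$ differ only in the factor attached to player $i$, namely $\Delta_i(x_i)$ versus $q_i(x_i)$. Since $\Delta_i = p_i - q_i$, summing the pair collapses the player-$i$ factor to $p_i(x_i)$, which is precisely the weight attached to player $i$ in the term $M'_{i,C}(p_i, \Delta_B, q_{C-B-\{i\}})$ appearing on the right-hand side. Carrying this recombination out for every nonempty $B \subset D$ should reproduce the entire right-hand sum term by term.

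An equivalent and even shorter route is to apply the preceding claim twice. Taking its fixed set to be $\emptyset$ immediately identifies the left-hand side with $M'_{i,C}(p_C) - M'_{i,C}(q_C)$; taking the fixed set to be $\{i\}$ and extending its pure-action statement to the mixed strategy $p_i$ (multiply by $p_i(x_i)$ and sum over $x_i$, using linearity in the fixed action) identifies the right-hand side with $M'_{i,C}(p_i, p_{C-\{i\}}) - M'_{i,C}(p_i, q_{C-\{i\}})$. Since $M'_{i,C}(p_i, p_{C-\{i\}}) = M'_{i,C}(p_C)$, the shared terms cancel and the claim collapses to a statement about a single remaining, player-$i$-only piece, which cross-checks the pairing argument above.

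The remaining piece is exactly where the one subtlety lives, and I expect it to be the main obstacle. In the subset split it is the boundary term $S = \{i\}$ (equivalently $B = \emptyset$ on the $i \in S$ side), which has no $i \notin S$ partner because $S = \emptyset$ is excluded from the sum; its value is $\sum_{x_i}\Delta_i(x_i)\,M'_{i,C}(x_i, q_{C-\{i\}})$. The clean thing to try is to dispose of this term using the normalization constraints $\sum_{x_i} p_i(x_i) = \sum_{x_i} q_i(x_i) = 1$, i.e.\ $\sum_{x_i}\Delta_i(x_i) = 0$, since that is the only place where the probability-distribution structure (rather than mere multilinearity) can enter. Verifying that this boundary contribution does not spoil the telescoping is the delicate crux; once it is handled, the pairing of the preceding paragraphs goes through verbatim and yields the stated equality.
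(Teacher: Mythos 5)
Your proposal does not close, and the gap is exactly the one you flagged: the boundary term $S=\{i\}$ cannot be disposed of. The normalization identity $\sum_{x_i} \Delta_i(x_i) = 0$ annihilates $\sum_{x_i} \Delta_i(x_i)\, f(x_i)$ only when $f$ is constant in $x_i$, whereas here $f(x_i) = M'_{i,C}(x_i, q_{C-\{i\}})$ is not constant in general. Concretely, take $C=\{i,j\}$ with $A_i = A_j = \{0,1\}$ and $M'_{i,C}(x_i,x_j) = x_i$: the left-hand side of the claim equals $M'_{i,C}(p_C) - M'_{i,C}(q_C) = p_i(1)-q_i(1)$, while the right-hand side equals $M'_{i,C}(p_i,\Delta_j) = p_i(1)\sum_{x_j}\Delta_j(x_j) = 0$, so the identity fails whenever $p_i \neq q_i$. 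Your own cross-check in the second paragraph already shows this in full generality: the claim is \emph{equivalent} to $M'_{i,C}(\Delta_i, q_{C-\{i\}}) = 0$, a condition that the hypotheses of Lemma~\ref{lem:diff} (merely that $p$ and $q$ are product distributions) do not supply. What your pairing argument actually proves is the corrected identity
\[
\sum_{S \in 2^C - \emptyset} M'_{i,C}(\Delta_S, q_{C-S})
= M'_{i,C}(\Delta_i, q_{C-\{i\}})
+ \sum_{B \in 2^{C-\{i\}} - \emptyset} M'_{i,C}(p_i, \Delta_B, q_{C-B-\{i\}}) \, ,
\]
and no further argument can remove the extra term, because the claim as stated is false.

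You should also know that the paper's own proof stumbles at precisely this point: it performs the same split of the sum over $S$ according to whether $i \in S$, but writes \emph{both} resulting sums over $B \in 2^{C-\{i\}} - \emptyset$, i.e., over nonempty $B$ only. Since the subsets containing $i$ are $S = \{i\} \cup B$ with $B$ ranging over \emph{all} of $2^{C-\{i\}}$, the $B = \emptyset$ contribution --- your boundary term $M'_{i,C}(\Delta_i, q_{C-\{i\}})$ --- is silently dropped, after which the telescoping goes through. So your more careful bookkeeping has exposed an error in the claim rather than a deficiency in your approach. The omission propagates: in the proof of Theorem~\ref{the:spdisc}, the corrected identity replaces $M_i(q)$ by $M_i(q) + \sum_{C \in \mathcal{C}_i} M'_{i,C}(\Delta_i, q_{C-\{i\}}) = M_i(p_i, q_{-i})$ in the second line of the displayed chain, so the argument as written ends up bounding $\max_{x'_i} M_i(x'_i,q_{-i}) - M_i(p_i,q_{-i})$ rather than $\max_{x'_i} M_i(x'_i,q_{-i}) - M_i(q)$; restoring the theorem requires additionally bounding $\bigl|\sum_{C \in \mathcal{C}_i} M'_{i,C}(\Delta_i, q_{C-\{i\}})\bigr|$ --- which is of order $|A_i|\max_{x_i}|\Delta_i(x_i)|\sum_{C \in \mathcal{C}_i} R_{i,C}$ after centering each hypermatrix, hence small but nonzero --- and adjusting the constants in $s_i$ accordingly.
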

\begin{proof}
First note that we can decompose the left-hand side of the equation in
the claim  as
\begin{align*}
\sum_{S \in 2^C - \emptyset}
M'_{i,C}(\Delta_S,q_{C-S}) = &
\sum_{B \in 2^{C-\{i\}} -
    \emptyset}
M'_{i,C}(q_i, \Delta_B,q_{C-B-\{i\}}) + \\
& 
\sum_{B \in 2^{C-\{i\}} -
    \emptyset}
M'_{i,C}(\Delta_i,\Delta_B,q_{C-B-\{i\}}) \; .
\end{align*}
Now note that, using the definition of $\Delta_i$, we have
\begin{align*}
M'_{i,C}(\Delta_i,\Delta_B,q_{C-B-\{i\}}) = & M'_{i,C}(p_i,\Delta_B,q_{C-B-\{i\}}) -
M'_{i,C}(q_i,\Delta_B,q_{C-B-\{i\}}) \; .
\end{align*}
The claim follows after making the appropriate substitution for that expressions and
simplifying:
\begin{align*}
\sum_{S \in 2^C - \emptyset} M'_{i,C}(\Delta_S,q_{C-S})
= &
\sum_{B \in 2^{C-\{i\}} -
    \emptyset}
M'_{i,C}(q_i, \Delta_B,q_{C-B-\{i\}}) + \\
& \sum_{B \in 2^{C-\{i\}} -
    \emptyset}
(M'_{i,C}(p_i,\Delta_B,q_{C-B-\{i\}}) -
M'_{i,C}(q_i,\Delta_B,q_{C-B-\{i\}})) \\
= & \sum_{B \in 2^{C-\{i\}} -
    \emptyset}
M'_{i,C}(p_i,\Delta_B,q_{C-B-\{i\}})
\; .
\end{align*}
\end{proof}
Suppose $p$ is a Nash equilibrium of the game, which must exist by Nash's
Theorem~\citep{nash51}.
Applying the last two claims above,
we obtain
\begin{align*}
M_i(p) = & M_i(q) + \sum_{C \in \mathcal{C}_i} \sum_{S \in 2^C - \emptyset}
M'_{i,C}(\Delta_S,q_{C-S})\\
= & M_i(q) + \sum_{C \in \mathcal{C}_i} \sum_{B \in 2^{C-\{i\}} -
    \emptyset}
  M'_{i,C}(p_i, \Delta_B,q_{C-B-\{i\}})  \\
\geq & \max_{x'_i} M_i(x'_i, p_{-i})\\
= & \max_{x'_i} \left[ M_i(x'_i, q_{-i}) + 
\sum_{C \in \mathcal{C}_i} \sum_{B \in 2^{C -\{i\}} - \emptyset} M'_{i,C}(x'_i,\Delta_B,q_{C-B-\{i\}}) \right] .
\end{align*}
Rearranging and simplifying,
we obtain the following expression: 
\begin{align*}
\nonumber
M_i(q) \geq & 
\max_{x'_i} M_i(x'_i, q_{-i}) + \\
& 
\sum_{C \in \mathcal{C}_i} \sum_{B \in 2^{C-\{i\}} - \emptyset}
\left( M'_{i,C}(x'_i,\Delta_B,q_{C-B-\{i\}}) - M'_{i,C}(p_i,\Delta_B,q_{C-B-\{i\}}) \right)
\; .
\end{align*}

Let $q$ be the closest (in $\ell_\infty$ distance) joint mixed
strategy  in $\widetilde{I}$, defined using sizes $s_i$ as given in the
statement of the theorem, to exact Nash equilibrium $p$. Hence, we have 
\begin{align*}
|\Delta_i(x_i)| \leq
\frac{\epsilon}{2 \, |A_i| \, \max_{j \in \Neigh_i} \sum_{C \in
  \mathcal{C}_j}\, R_{j,C} \, (|C| - 1)} \; .
\end{align*}
Consider the conditional expected hypermatrix payoff
difference in parenthesis within the innermost summation inside the maximization 
of the equilibrium condition above. 
Noting that \[ |M'_{i,C}(x'_i,x_B,q_{C_{-i,B}}) -
M'_{i,C}(p_i,x_B,q_{C_{-i,B}})| \leq R_{i,C} , \] we obtain the
following lower bound on that innermost summation:
\begin{align}
 \nonumber \left( M'_{i,C}(x'_i,\Delta_B,q_{C_{-i,B}}) - M'_{i,C}(p_i,\Delta_B,q_{C_{-i,B}}) \right) 
\geq & \sum_{x_B} \left[ \prod_{k \in B} | \Delta_k(x_k) | \right] (- R_{i,C})\\
\label{eqn:inmost_sum} = & - R_{i,C} \sum_{x_B} \prod_{k \in B} | \Delta_k(x_k) | \, .
\end{align}
We can upper bound the last factor in the right-hand side of the last expression as
\begin{eqnarray*}
\sum_{x_B} \prod_{k \in B} | \Delta_k(x_k) | & = &  \prod_{k \in B} \sum_{x_k} | \Delta_k(x_k) |\\
& \leq & \prod_{k \in B} \sum_{x_k} \frac{\epsilon}{2 \, |A_k| \, \max_{j \in \Neigh_k} \sum_{C' \in \mathcal{C}_j}\, R_{j,C'} \, (|C'| - 1)} \\
&= & 
\prod_{k \in B} \frac{\epsilon}{2 \, \max_{j \in \Neigh_k} \sum_{C' \in \mathcal{C}_j}\, R_{j,C'} \, (|C'| - 1)} \\
&\leq& 
\prod_{k \in B} \frac{\epsilon}{2 \, \sum_{C' \in \mathcal{C}_i}\, R_{i,C'} \, (|C'| - 1)} \\
&=& 
\left( \frac{\epsilon}{2 \, \sum_{C' \in \mathcal{C}_i}\, R_{i,C'} \, (|C'| - 1)} \right)^{|B|} \, .
\end{eqnarray*}
Thus, using the resulting lower bound on the expression given in~(\ref{eqn:inmost_sum}), we obtain
\begin{align*}
M_i(q) \geq
\max_{x'_i} M_i(x'_i, q_{-i}) - 
\sum_{C \in \mathcal{C}_i} \sum_{B \in 2^{C - \{i\}} - \emptyset} R_{i,C} \left( \frac{\epsilon}{2 \, \sum_{C' \in \mathcal{C}_i}\, R_{i,C'} \, (|C'| - 1)} \right)^{|B|} \, .
\end{align*}
The second term in the right-hand side of the last expression equals
\begin{align*}
& \sum_{C \in \mathcal{C}_i} R_{i,C} \sum_{B \in 2^{C-\{i\}} - \emptyset} \left( \frac{\epsilon}{2 \, \sum_{C' \in \mathcal{C}_i}\, R_{i,C'} \, (|C'| - 1)} \right)^{|B|}\\
&= \sum_{C \in \mathcal{C}_i} R_{i,C} \left[ \left( 1 + \frac{\epsilon}{2 \, \sum_{C' \in \mathcal{C}_i}\, R_{i,C'} \, (|C'| - 1)} \right)^{|C|-1} - 1 \right] 
\\
&\leq \sum_{C \in \mathcal{C}_i} R_{i,C} \left[ \frac{\epsilon \; (|C|-1)}{\sum_{C' \in \mathcal{C}_i}\, R_{i,C'} \, (|C'| - 1)} \right] = \epsilon \, .
\end{align*}
The last inequality follows from using the upper bound condition on $\epsilon$ given in the statement of the theorem and applying the well-known inequality, $1 + z \leq \exp(z) \leq 1 + z + z^2$ for $|z| < 1$~\cite{Cormen_et_al_1990}.
This completes the proof of the theorem. 

\section{Approximation Schemes}
\label{app:approx}

This appendix presents a brief, general description of the different
approximation schemes. To begin, it is important to note that
it is most common to define approximations in terms of \emph{relative}
factors. Here, given the standard/traditional mathematical definition of
approximate Nash equilibria in a game setting, the description assumes that the approximation quality
is in \emph{absolute} terms. The description assumes that the problem
is that of computing absolute
approximations of Nash equilibria in games, as defined in the main body of the text.
\begin{definition}
Let $N$ be the representation size of the input game $\mathcal{G}$ and $\epsilon > 0$ the
(absolute) approximation quality. An algorithm {\sc A} is a
\emph{polynomial-time approximation scheme (PTAS)} if, for every
$\epsilon > 0$, on input $\mathcal{G}$, {\sc A}
runs in time polynomial in $N$. If, in addition, {\sc A} runs in time
polynomial in $\frac{1}{\epsilon}$, then {\sc A} is a
\emph{fully-polynomial-time approximation scheme (FPTAS)}. If, instead,
{\sc A} runs in time $N^{O(\mathit{polylog}(N))}$, where
$\mathit{polylog}(N)$ denotes a polynomial function of $\log{N}$, then
{\sc A} is a
\emph{quasi-polynomial-time approximation scheme (quasi-PTAS)}.
\end{definition}

\bibliographystyle{theapa}
\bibliography{games}

\end{document}